\documentclass[letterpaper]{article} 
\usepackage{aaai24}  
\usepackage{times}  
\usepackage{helvet}  
\usepackage{courier}  
\usepackage[hyphens]{url}  
\usepackage{graphicx} 
\urlstyle{rm} 
  
\usepackage{natbib}  
\usepackage{caption} 
\DeclareCaptionStyle{ruled}{labelfont=normalfont,labelsep=colon,strut=off} 
\frenchspacing  \setlength{\pdfpagewidth}{8.5in}  
\setlength{\pdfpageheight}{11in}

\usepackage{xspace}
\usepackage{tabularx}
\usepackage{subcaption}
\usepackage{booktabs}
\usepackage{amsmath}
\usepackage{soul}
\usepackage{algorithm}
\usepackage[noend]{algorithmic}
\usepackage{xcolor}
\usepackage{amsmath}
\usepackage{amsfonts}
\usepackage{subcaption}
\usepackage[textsize=tiny]{todonotes}
\usepackage{makecell}
\usepackage{color, colortbl}
\usepackage{multirow}

\definecolor{grayline}{RGB}{244,246,246}

\usepackage{amsthm}
\usepackage{thmtools}
\usepackage{thm-restate}
\newtheorem{definition}{Definition}

\newtheorem{theorem}{Theorem}

\pdfinfo{
/TemplateVersion (2024.1)
}

\setcounter{secnumdepth}{1}

\usepackage{macros}

\title{Generalising Planning Environment Redesign}

\author{
	Alberto Pozanco\textsuperscript{\rm 1}\equalcontrib,
    Ramon Fraga Pereira\textsuperscript{\rm 2}\equalcontrib, and
    Daniel Borrajo\textsuperscript{\rm 1}\footnote{On leave from Universidad Carlos III de Madrid.}
}
\affiliations{
	\textsuperscript{\rm 1}J.P. Morgan AI Research\\
    \textsuperscript{\rm 2}University of Manchester, UK\\
    \{alberto.pozancolancho,daniel.borrajo\}@jpmorgan.com \\ ramon.fragapereira@manchester.ac.uk
}

\begin{document}

\maketitle

\begin{abstract}

In \textit{Environment Design}, one interested party seeks to affect another agent's decisions by applying changes to the environment.
Most research on planning environment (re)design assumes the interested party's objective is to facilitate the recognition of goals and plans, and search over the space of environment modifications to find the minimal set of changes that simplify those tasks and optimise a particular metric.
This search space is usually intractable, so existing approaches devise metric-dependent pruning techniques for performing search more efficiently.
This results in approaches that are not able to generalise across different objectives and/or metrics.
In this paper, we argue that the interested party could have objectives and metrics that are not necessarily related to recognising agents' goals or plans. Thus, to generalise the task of \textit{Planning Environment Redesign}, we develop a general environment redesign approach that is \textit{metric-agnostic} and leverages recent research on top-quality planning to efficiently redesign planning environments according to any interested party's objective and metric.
Experiments over a set of environment redesign benchmarks show that our general approach outperforms existing approaches when using well-known metrics, such as facilitating the recognition of goals, as well as its effectiveness when solving environment redesign tasks that optimise a novel set of different metrics.

\end{abstract}

\section*{Introduction}\label{sec:introduction}

In \textit{Environment Design}~\cite{ERD_IJCAI_ZhangCP09}, one interested party (usually referred to as observer) seeks to affect another agent's decisions by applying a minimal set of changes to the environment.
Most research on planning environment (re)design has focused on cooperative and adversarial settings where the observer aims to facilitate \textit{Goal or Plan Recognition}~\cite{RamirezGeffnerIJCAI2009}, i.e., infer the agent's goal or plan as soon as possible.
These tasks are known as \textit{Goal Recognition Design} (\grd)~\cite{ICAPS_KerenGK14} and \textit{Plan Recognition Design} (\prd)~\cite{TIST_MirskyGSK19}, respectively, and they have been studied under different interested party objectives, metrics, and observer's capabilities~\cite{IJCAI_KerenGK16,kulkarni2019unified,AGR_ShvoM20}, as well as agents' intentions and environment assumptions~\cite{keren2021goal}.

Existing research on planning environment design defines a \textit{metric} that is able to assess how long (in terms of action progress, i.e., number of observations) an agent can act in an environment without revealing its intended goal or plan to the observer~\cite{keren2021goal}.
Optimising these metrics can force the agent's behaviour to be more \textit{transparent},
\textit{ambiguous}, or endow \textit{predictability}~\cite{Chakraborti2019explicability}.
Most approaches assume that the environment can only be modified by removing actions. So, they search over the space of actions' removal to compute the best set of environment changes for a given metric~\cite{keren2021goal}.
Since this space is usually intractable, existing approaches devise different heuristics and pruning techniques to perform search efficiently depending on the specific metric to be optimised.
This results in \textit{metric-dependent} approaches that are not robust enough to generalise across different environment redesign metrics.

In this paper, we propose a \textit{metric-agnostic} approach to redesign \textit{fully observable} and \textit{deterministic} planning environments.
Our main contributions are twofold, as follows:

\begin{itemize}
    \item \textit{Environment Redesign} has mainly focused on promoting or impeding the recognition of goals/plans. We argue that the interested party could have other objectives that are not necessarily related to identifying the agent's goal/plan. For example, the interested party might want to redesign the environment such that the agent is constrained to follow plans that keep certain relationships with some states. This can be beneficial in many planning settings, such as \textit{Anticipatory Planning}~\cite{DBLP:conf/aips/BurnsBRYD12,aicomm18-anticipatory,arxiv22-counterplanning}, \textit{Counterplanning}~\cite{PozancoCounterPlanningEFB18}, \textit{Risk Avoidance and Management}~\cite{sohrabi2018ai}, or \textit{Planning for Opportunities}~\cite{DBLP:conf/aips/BorrajoV21,iros21}. Thus, we propose \textit{novel metrics} that can be used to redesign environments for these other settings.
   
    \item To generate new environments that optimize our \textit{novel metrics}, as well as existing metrics in the literature, we propose \approach, a \textit{General Environment Redesign} approach that employs an anytime \textit{Breadth-First Search} (BFS) algorithm. It exploits recent research on \textit{top-quality} planning~\cite{katz2020top} to improve efficiency. While previous approaches have also used BFS to explore the space of environment modifications, they assume the extremes of the spectrum. \citet{ICAPS_KerenGK14} do not assume a \textit{plan-library}, so they have to explore the state space and reason over the quality of the plans and the metric value in the environment induced by the current modifications. This yields very costly approaches that are not able to scale to complex environments with many goals. In contrast,~\citet{TIST_MirskyGSK19}'s approach assumes a hand-crafted plan-library is provided as input, which allows the algorithm to reduce the action's removal space by just considering the actions appearing in the plan-library. We propose a middle-ground approach, in which the action space is pruned by a plan-library that is not explicitly given as input, but computed using \textit{top-quality} planning~\cite{katz2020top}. 
\end{itemize}

We evaluate \approach in a set of benchmarks for environment redesign, and show that it outperforms existing approaches~\cite{ICAPS_KerenGK14} (being orders of magnitude faster) in known redesigning tasks such as \grd. We also show its effectiveness when solving environment redesign tasks that optimise a novel set of different metrics.

\section*{Background}\label{sec:background}

\textit{Planning} is the task of devising a sequence of actions (i.e., a \textit{plan}) to achieve a goal state from an initial state~\cite{GeffnerBonet13_PlanningBook}.
We follow the formalism and assumptions of the \textit{Classical Planning} setting, and assume that an environment is \textit{discrete}, \textit{fully observable}, and \textit{deterministic}.

A \textit{planning domain} $\domain$ is defined as $\langle \facts, \actions \rangle$, where: $\facts$ is a set of \textit{facts}; $\actions$ is a set of \textit{actions}, where every action $a \in \actions$ has a set of preconditions, \textit{add} and \textit{delete} effects, $\pre(a), \add(a), \del(a)$, and a \textit{positive cost}, denoted as $\cost(a)$.
We define a \textit{state} $\state$ as a finite set of positive facts $f \in \facts$ by following the \textit{closed world assumption}, so that if $f \in \state$, then $f$ is true in $\state$. We also assume a simple inference relation $\models$ such that $\state \models f$ iff $f \in \state$, $\state \not\models f$ iff $f \not \in \state$, and $\state \models f_0 \land ... \land f_n$ iff $\{f_0, ..., f_n\} \subseteq \state$.
An action $a \in \actions$ is applicable to a state $\state$ iff  $\state \models \pre(a)$, and it generates a new successor state $\state'$ by applying $a$ in $\state$, such that $\state' = (\state\setminus\del(a))\cup\add(a)$. 

A \textit{planning problem} $\problem$ is defined as $\langle \domain, \initialstate, \goal \rangle$, where: $\domain$ is a planning domain as we described above; $\initialstate \subseteq \facts$ is the \textit{initial state}; and $\goal \subseteq \facts$ is the \textit{goal state}.
A \textit{solution} to $\problem$ is a \textit{plan} $\plan = [a_0, a_1, ..., a_n]$ that maps $\initialstate$ into a state $\state$ that holds $\goal$, i.e., $\state \models G$. 
The cost of a plan $\plan$ is $\cost(\plan) = \Sigma~\cost(a_{i})$, and a plan $\plan^{*}$ is \textit{optimal} (with minimal cost) if there exists no other solution $\plan$ for $\problem$ such that $\cost(\plan) < \cost(\plan^{*})$. 
We use $h^*(s, G)$ to refer to the cost of an optimal plan of achieving $G$ from $s$.

We refer to $\allplans(\problem,b)$ as the \textit{set of all plans} that solve a planning problem $\problem$ \textit{within a sub-optimality bound} $b$~\cite{katz2020top}. 
This bound is defined as the cost of a plan $\plan$ with respect to the cost of an optimal plan $\optimalplan$, i.e., $b=\frac{\plan}{\optimalplan}$.
Therefore, $\allplans(\problem,1.0)$ will give us all the optimal plans that solve $\problem$, and $\allplans(\problem,1.5)$ will give us all the plans that solve $\problem$ within a sub-optimality bound of $1.5$.
When $b > 1$, plans in $\allplans(\problem, b)$ might contain loops, i.e., they visit at least one state more than once.
In the rest of the paper we assume that $\allplans(\problem, b)$ only contains loop-less plans~\cite{von2022loopless}. 

\section*{Planning Environment Redesign}\label{sec:planning_environment_redesign}

\textit{Planning Environment Redesign} is the task in which an interested party (observer) aims to perform off-line modifications to a \textit{planning environment} (or just \textit{environment}) where another agent will be acting, in order to constraint its potential behaviour.
Following the formalism of~\cite{ICAPS_KerenGK14}, we define a \textit{planning environment} with deterministic actions under fully observability, as follows:
\begin{definition}\label{def:planning_environment}
    A \textbf{planning environment} is a tuple $\env = \langle \envplanning = \langle \facts, \actions, \initialstate \rangle, \goals \rangle$ where $\facts, \actions$ and $\initialstate$ are the same as in a planning problem, and $\goals$ is a set of possible reachable goals $\{ G_0, G_1, ..., G_n \}$ that are of interest to either the observer or the agent.
\end{definition}

We define the \textit{planning environment redesign} problem in Definition~\ref{def:environment_redesign}, and its solutions~in~Definitions~\ref{def:environment_redesign_solution}~and ~\ref{def:environment_redesign_optimal_solution}.

\begin{definition}\label{def:environment_redesign}
A \textbf{planning environment redesign problem} is a tuple $\envred = \langle \env, M_b \rangle$ where $\env$ is the current planning environment, and $M_b$ is a metric to be optimised in order to get the new redesigned environment, assuming the agent's behaviour sub-optimality is bounded by a constant $b$.
\end{definition}

This definition is more general than the one in~\cite{JARI_KerenGK19,TIST_MirskyGSK19}, as we include the metric $M_b$ in the definition, making the problem definition \textit{metric-agnostic}.
We do not make any assumption on the relation between the observer and the agent, i.e., they could be competing, cooperating, or indifferent.

\begin{definition}\label{def:environment_redesign_solution}
A \textbf{solution} to an environment redesign problem ~$\envred = \langle \env, M_b \rangle$ is a new redesigned environment $\env' = \langle \envplanning' = \langle \facts, \actions', \initialstate \rangle, \goals \rangle$ where $\env'$ contains a new set of actions, $\actions'$, and all goals in $\goals$ are still reachable using $\envplanning'$.
\end{definition}

Although environments could be redesigned by adding or removing any element in $\env$, we follow~\cite{JARI_KerenGK19} and~\cite{TIST_MirskyGSK19}, and assume that environments are redesigned through \textit{action removal}. Thus, $\actions' = \actions \setminus \actions_\neg$, where $\actions_\neg$ is the removed actions from $\actions$.

\begin{definition}\label{def:environment_redesign_optimal_solution}
An \textbf{optimal solution} to a planning environment redesign problem~$\envred = \langle \env, M_b \rangle$ is a redesigned planning environment $\env^{*}= \langle \envplanning' = \langle \facts, \actions', \initialstate \rangle, \goals \rangle$ that optimises the given redesign metric $M_b$ while minimising $|\actions_\neg|$.
\end{definition}

An optimal solution to an environment redesign problem $\envred$ optimises the given metric $M_b$, breaking ties in favour of solutions requiring less changes to the environment.

\section*{Environment Redesign Metrics}\label{subsec:metrics}

Before proceeding to define the redesign metrics, we first provide some common notation and introduce the running example we use throughout the paper.

The \textit{metrics} we use for environment redesign rely on reasoning about sets of plans for the possible goals $\goals$ in $\envred$, and we refer to these sets of plans as a \textit{plan-library} $\planlibrary$ (following the terminology of \textit{set of plans} defined in Section~\ref{sec:background}). 
We formally define a \textit{plan-library} $\planlibrary$ in Definition~\ref{def:plan_library}.

\begin{definition}\label{def:plan_library}
Given an environment $\env$ and a sub-optimality bound $b$, a \textbf{plan-library} of a planning environment with a bound $b$ is defined as $\planlibrary(\env, b) = \bigcup_{G_i \in \goals} \Pi(\langle \envplanning, \{G_i\} \rangle, b)$.
\end{definition}

Redesign metrics often relate to \textbf{plan prefixes} of a given size $n$, i.e., the first $n$ actions of a plan $\pi$.
We use $\planprefix_n$ to refer to the first $n$ actions of a plan $\pi$. 
Similarly, we use $\Pi_n$ to denote all the plan prefixes of size $n$ of a given set of plans $\Pi$.
We abuse the notation and say that a plan prefix is inside a set of plans ($\planprefix_n \in \Pi$) iff there exists a plan $\pi \in \Pi$ for which $\planprefix_n$ is a plan prefix.
We assume the actions $\actions$ have a uniform cost equal to 1, but the metrics we define here are not limited to uniform cost.
In order to simplify notation, we use $x', x''$ when referring to two different elements in a set, i.e., $x' \neq x''$.
We also use $x \in (X', X'')$ to denote that $x \in  X' \land x \in  X''$.

\begin{figure}[t!]
    \centering
    \includegraphics[scale=0.2]{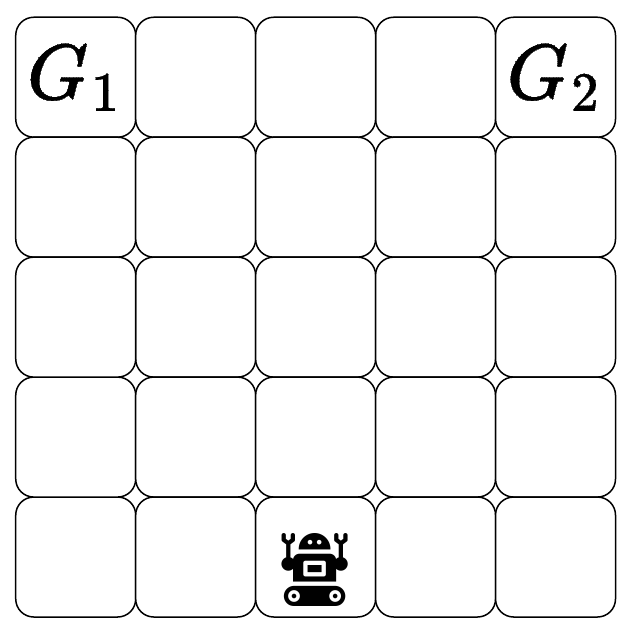}
    \caption{\textsc{grid} environment where the agent located at cell $(2,0)$ and has two possible goals: $G_1 = (0,4)$; $G_2 = (4,4)$.}
    \label{fig:running_example}
\end{figure}
As a \textit{running example}, we use the \textsc{grid} environment shown in Figure~\ref{fig:running_example}, where a robot can move in the four cardinal directions, and its possible goals consist of reaching the cells depicted by $G_1$ and $G_2$.
We use $(x,y)$ coordinates when referring to cells in the grid.
When formalising the redesign metrics, we assume optimal agents' behaviour,  so agents only follow optimal plans to achieve their goals ($b=1.0$ when computing sets of plans).

\subsection{Redesign Metrics}

We now formally define a set of environment redesign metrics, in which two of them are well-known in the literature~\cite{ICAPS_KerenGK14,TIST_MirskyGSK19}, and the other ones are our \textit{novel redesign metrics}.

\begin{figure*}[t!]
     \centering
     \begin{subfigure}[b]{0.12\textwidth}
         \centering
         \includegraphics[width=\columnwidth]{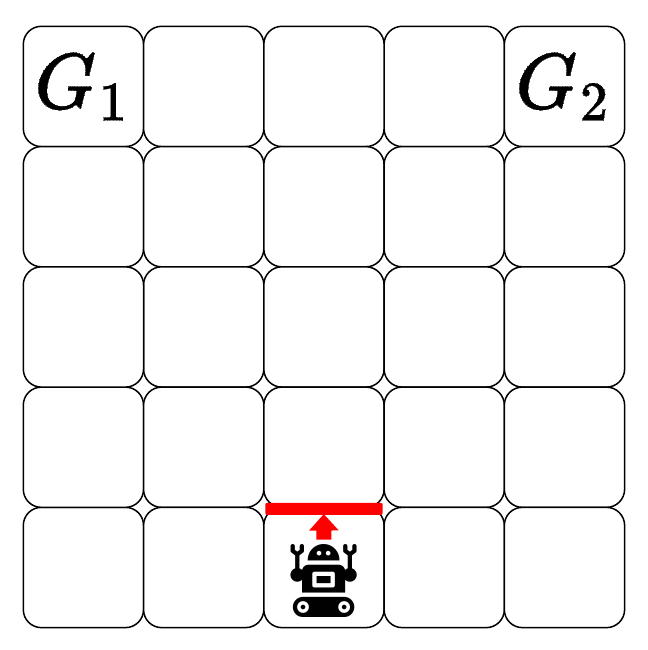}
         \caption{\it Goal \\ Transparency.}
         \label{fig:goal_transparency}
     \end{subfigure}
     \hfill
     \begin{subfigure}[b]{0.12\textwidth}
         \centering
         \includegraphics[width=\columnwidth]{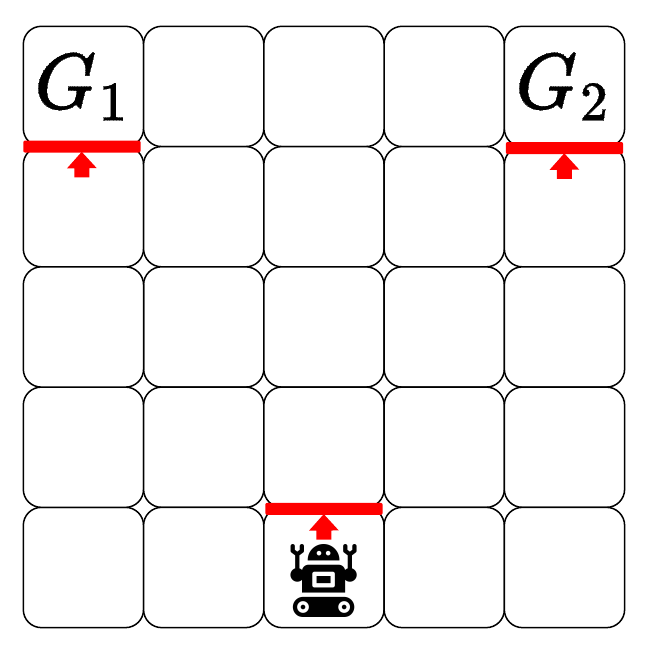}
         \caption{\it Plan \\ Transparency.}
         \label{fig:plan_transparency}
     \end{subfigure}
     \hfill
     \begin{subfigure}[b]{0.12\textwidth}
         \centering
         \includegraphics[width=\columnwidth]{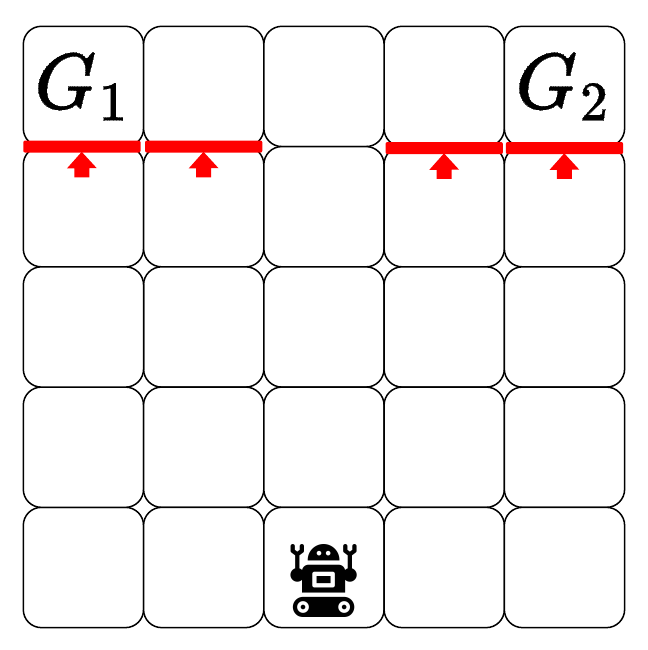}
         \caption{\it Goal \\ Privacy.}
         \label{fig:goal_privacy}
     \end{subfigure}
     \hfill
     \begin{subfigure}[b]{0.12\textwidth}
         \centering
         \includegraphics[width=\columnwidth]{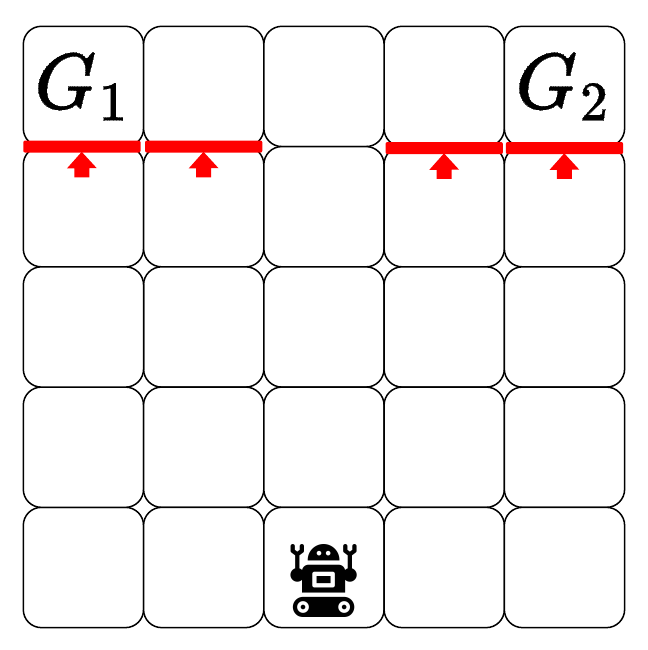}
         \caption{\it Plan \\ Privacy.}
         \label{fig:plan_privacy}
     \end{subfigure}
     \hfill
\begin{subfigure}[b]{0.12\textwidth}
         \centering
         \includegraphics[width=\columnwidth]{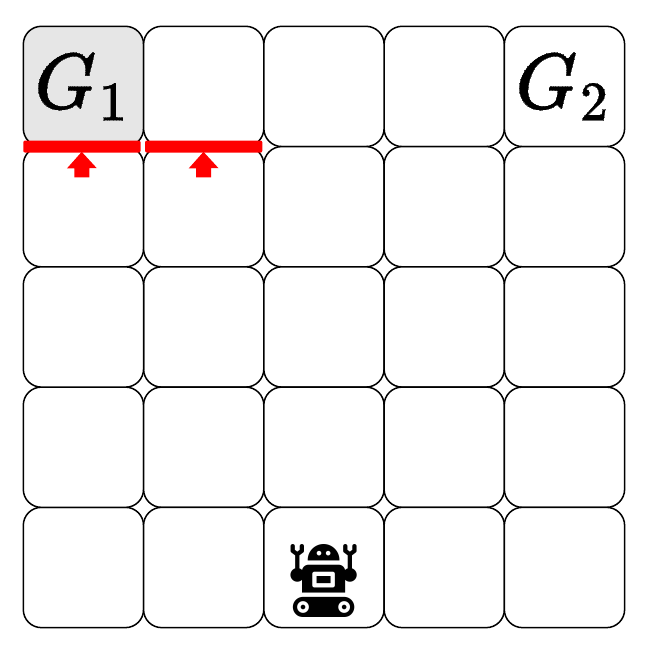}
         \caption{\it Min. Avg. Distance.}
         \label{fig:min_avg_distance}
     \end{subfigure}
     \hfill
     \begin{subfigure}[b]{0.12\textwidth}
         \centering
         \includegraphics[width=\columnwidth]{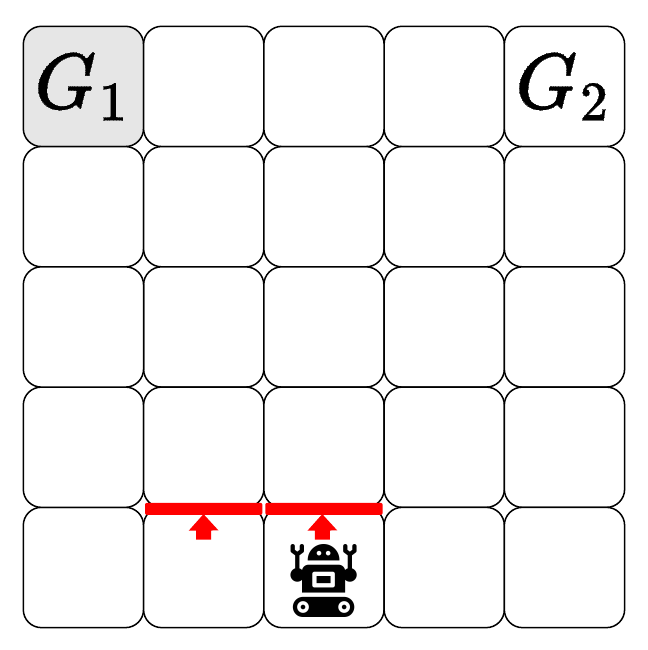}
         \caption{\it Max. Avg. Distance.}
         \label{fig:max_avg_distance}
     \end{subfigure}
     \hfill
     \begin{subfigure}[b]{0.12\textwidth}
         \centering
         \includegraphics[width=\columnwidth]{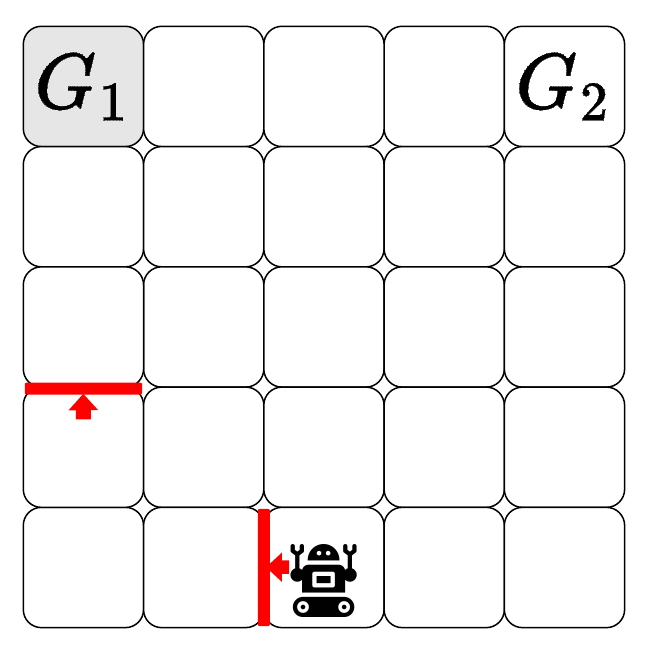}
         \caption{\it Min. Max. Distance.}
         \label{fig:min_max_distance}
     \end{subfigure}
     \hfill
     \begin{subfigure}[b]{0.12\textwidth}
         \centering
         \includegraphics[width=\columnwidth]{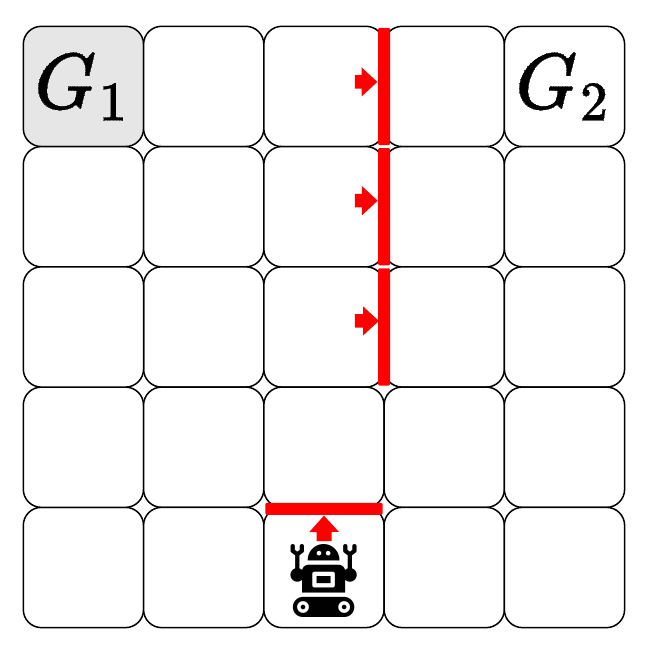}
         \caption{\it Max. Min. Distance.}
         \label{fig:max_min_distance}
     \end{subfigure}
     \caption{Redesigned environments for different metrics by using our approach with a time limit of $15$ minutes and a maximum number of removed actions $|\actions_\neg|=4$. Red arrows and lines indicate removed actions $\actions_\neg$. Intended goals are depicted in grey.}
        \label{fig:metrics}
\end{figure*}

\subsubsection{Goal Transparency (\goaltransparency).}
\textit{Goal Transparency} (equivalent to \grd) aims at redesigning an environment such that an observer can infer agents' (or humans') true intended goal as soon as possible.
This is useful in many applications such as \textit{transparent planning}~\cite{ActionSelection_MacNallyLRP18}, \textit{human-robot collaboration}~\cite{kulkarni2020designing}, or \textit{counterplanning}~\cite{PozancoCounterPlanningEFB18}.
\textit{Goal Transparency} can be achieved by minimising the \textit{worst case distinctiveness} (\textit{wcd}) of an environment $\env$~\cite{ICAPS_KerenGK14,JARI_KerenGK19}. We adapt the notation of~\citet{ICAPS_KerenGK14,JARI_KerenGK19} and formally define \textit{wcd} as follows:

\begin{definition}\label{def:wcd_goal}
    Given a planning environment $\env$ in $\envred = \langle \env = \langle \envplanning, \goals \rangle, M_b \rangle$, let $\Pi' = \Pi(\langle \envplanning, \goal' \rangle, b)$, and $\Pi'' = \Pi(\langle \envplanning, \goal'' \rangle, b)$ for $\goal', \goal'' \in \goals$. The \textbf{worst case distinctiveness (wcd)} of a pair of goals $\goal', \goal''$ is the length of the longest plan prefix $\planprefix$ that is present in $\Pi'$ and $\Pi''$:
    \begin{equation*}
\mbox{wcd}(\goal',\goal'') = \max \{ n \mid \planprefix_n \in (\Pi' \cap \Pi'')\}
    \end{equation*}
    Thus, the \textbf{worst case distinctiveness of a planning environment} is denoted as wcd($\env$), and defined as:
    \begin{equation*}
        \mbox{wcd}(\env) = \max_{\goal', \goal'' \in \goals} \mbox{wcd}(\goal', \goal'')
    \end{equation*}
\end{definition}

The \textit{wcd} of the original environment shown in Figure~\ref{fig:running_example} is $4$, and the agent can execute $4$ different actions (moving up $4$ times) without revealing its actual goal.
Figure~\ref{fig:goal_transparency} shows an optimal solution of the environment redesign problem where \textit{Goal Transparency} is optimised ($M_{1.0}$), $\textit{wcd}=0$, and $|\actions_\neg|=1$.
In this new environment, an observer will be able to recognise the agent's goal as soon as it executes the first action because the removal of the action to move from $(2,0)$ to $(2,1)$ forces the agent to move left or right, thus revealing its goal.
\textit{Goal Transparency} can also accommodate sub-optimal agents by adjusting the bound $b$, thus being useful for related tasks such as avoiding/preventing \textit{goal obfuscation}~\cite{GoalObsf_BernardiniFF20} and \textit{deception}~\cite{MastersDeceptionS17,PriceEtAl_AAMAS23}.

\subsubsection{Plan Transparency (\plantransparency).} 
One could aim to redesign an environment such that an observer can infer the agents (or humans) intended plans as soon as possible. We define such a task as \textit{Plan Transparency} (equivalent to \prd).
This is a stricter variant of \textit{Goal Transparency}, so its applications are the same, and it can be achieved by minimising the \textit{worst case plan distinctiveness} (\textit{wcpd}) of an environment $\env$~\cite{TIST_MirskyGSK19}.
We adapt the notation in~\cite{TIST_MirskyGSK19} and formally define \textit{wcpd} as follows:

\begin{definition}\label{def:wcpd_plan}
    Given $\envred = \langle \env = \langle \envplanning, \goals \rangle, M_b \rangle$, let $\pi',\pi'' \in \planlibrary(\env,b)$. The \textbf{worst case plan distinctiveness (wcpd)} of $\pi', \pi''$ is the length of the longest plan prefix $\planprefix$ in $\pi'$ and $\pi''$:
    \begin{equation*}
\mbox{wcpd}(\pi', \pi'') = \max \{ n \mid \planprefix_n \in (\{\pi'\},\{\pi''\})\}
    \end{equation*}
    Thus, the \textbf{worst case plan distinctiveness of a planning environment}, denoted as wcpd($\env$), is defined as:
    \begin{equation*}
        \mbox{wcpd}(\env) = \max_{\pi', \pi'' \in \planlibrary(\env, b)} \mbox{wcpd}(\pi',\pi'')
    \end{equation*}
\end{definition}

The $wcpd$ of the original environment shown in Figure~\ref{fig:running_example} is $4$: the agent can execute $4$ actions (moving up $4$ times) without revealing its intended plan. 
Figure~\ref{fig:plan_transparency} shows an optimal solution for this problem, where \textit{Plan Transparency} is optimised ($M_{1.0} = \mbox{PT}$), $wcpd=0$, and $|\actions_\neg| = 3$.
In this new environment, an observer will be able to recognise the agent's plan as soon as it executes the first action, as now the agent has only one optimal plan available to achieve the goals.
\citeauthor{TIST_MirskyGSK19}~\shortcite{TIST_MirskyGSK19} prove two important properties for \textit{wcpd}:
(1) the \textit{wcd} of two goals is equal to the maximum \textit{wcpd} of the plans for achieving those goals; and (2) the \textit{wcpd} of the entire plan-library $\planlibrary(\env, 1)$ is at least as high as the \textit{wcd}($\env$).

\subsubsection{Goal Privacy (\goalprivacy).}
Sometimes, autonomous agents or humans plan and act in an environment in order to keep their goals \textit{private}. To endow \textit{Goal Privacy},
one could redesign an environment to allow agents (or humans) to keep their goals as private as possible during the execution of their plans. 
\textit{Goal Privacy} can prevent goal recognition and be useful in adversarial settings~\cite{kulkarni2019unified} such as \textit{goal obfuscation}~\cite{GoalObsf_BernardiniFF20}.
We introduce a novel metric called \textit{worst case non-distinctiveness} (\textit{wcnd)}. Then, \textit{Goal Privacy} optimization will be equivalent to maximising \textit{wcnd}.
We define \textit{wcnd} as follows:
\begin{definition}\label{def:wcnd_goal}
    Given $\envred = \langle \env = \langle \envplanning, \goals \rangle, M_b \rangle$, let $\Pi' = \Pi(\langle \envplanning, \goal' \rangle, b)$, and $\Pi'' = \Pi(\langle \envplanning, \goal'' \rangle, b)$ for $\goal', \goal'' \in \goals$. The \textbf{worst case non-distinctiveness (wcnd)} of a pair of goals $\goal', \goal''$ is the length of the shortest plan prefix $\planprefix$ for which the symmetric difference\footnote{The symmetric difference $\bigtriangleup$ of two sets of plans is an operation that returns a set that includes plans present in either of the two sets of plans but absent in their intersection.} of the plans sets $\Pi'_n$ and $\Pi''_n$ of size $n$ is empty:
    \begin{equation*}
\mbox{wcnd}(\goal',\goal'') = \min \{ n \mid  (\Pi'_n \bigtriangleup \Pi''_n) \neq \emptyset\}
    \end{equation*}
Thus, the \textbf{worst case non-distinctiveness of a planning environment} is denoted as wcnd($\env$), and defined as:
    \begin{equation*}
        \mbox{wcnd}(\env) = \min_{\goal', \goal'' \in \goals} \mbox{wcnd}(\goal', \goal'')
    \end{equation*}
\end{definition}

The \textit{wcnd} of the environment shown in Figure~\ref{fig:running_example} is $0$, and the agent can execute actions that might reveal its intended goal (moving left or right).
Figure~\ref{fig:goal_privacy} shows an optimal solution of the environment redesign problem, where \textit{Goal Privacy} is optimised ($M_{1.0}=\mbox{GP}$), $wcnd=4$, and $|\actions_\neg| = 4$.
As a result, the agent is forced to execute four actions without revealing its true intended goal.
This metric and the resulting redesigned environment, are different (and more strict) than just maximising $wcd$.
The original environment already has a maximum $wcd$ of $4$, so an optimal solution to maximise $wcd$ would be an empty solution; i.e., do not apply any modification to the environment. However, this solution would allow the agent to execute actions that could reveal its intended goal earlier, so it would not be a solution to \textit{Goal Privacy}.

\subsubsection{Plan Privacy (\planprivacy).}

When facing specific situations that entail continuous monitoring, we may want to preserve our privacy by concealing what we are doing or aim to do. To do so, one could redesign an environment such that agents can to keep their executed plans as private as possible. We define this task as \textit{Plan Privacy}.
This is a variant of \textit{Goal Privacy}, so its applications are essentially the same, and 
the redesign metrics for \textit{Goal and Plan Privacy} may have similar values depending on the problem.
To achieve \textit{Plan Privacy}, we define a novel metric called \textit{worst case plan non-distinctiveness}, denoted as \textit{wcpnd}, and it can be optimised by maximising \textit{wcpnd}.
We formally define \textit{wcpnd} as follows:

\begin{definition}\label{def:wcpnd_plan}
    Given $\envred = \langle \env = \langle \envplanning, \goals \rangle, M_b \rangle$, let $\pi',\pi'' \in \planlibrary(\env,b)$. The \textbf{worst case plan non-distinctiveness (wcpnd)} of $\pi', \pi''$
    is the length of the shortest plan prefix $\planprefix$ for which $\pi'\neq \pi''$:
    {
    \begin{equation*}
\mbox{wcpnd}(\pi',\pi'') = \min \{ n \mid \pi'_n \neq \pi''_n\}
    \end{equation*}
    }
    Thus, the \textbf{worst case plan non-distinctiveness of a planning environment model}, denoted as wcpnd($\env$), is defined as:
	{
	\begin{equation*}
    	\mbox{wcpnd}(\env) = \min_{\pi',\pi'' \in \planlibrary(\env,b)} \mbox{wcpnd}(\pi',\pi'') 
	\end{equation*}
	}
\end{definition}

The \textit{wcpnd} of the original environment shown in Figure~\ref{fig:running_example} is $0$, in which the agent can act freely without being private about its executed plans.
Figure~\ref{fig:plan_privacy} shows an optimal solution of the environment redesign problem where \textit{Plan Privacy} is optimised ($M_{1.0}=\mbox{PP}$), $wcpnd=4$, and $|\actions_\neg| = 4$.
In this resulting environment, the agent can act by executing at least four actions in an optimal plan without revealing its intended plan.
\textit{Plan Privacy} can also accommodate sub-optimal agents' behaviour by adjusting the bound $b$, thus being useful for related planning applications where sub-optimal plans play an important role, such as \textit{deceptive planning}~\cite{MastersDeceptionS17,PriceEtAl_AAMAS23}.

\subsubsection{Minimise Average Distance (\minavgd).}
Certain situations require that agents (or humans) act in an environment to stay as close as possible to certain states. To accomplish this, one could redesign an environment such that an agent would be forced to ``stay'' \textit{as close as possible} to a set of partial states whilst acting for achieving its true goal. We define this task as \textit{Minimise Average Distance}, and its applications may include \textit{anticipatory planning}~\cite{DBLP:conf/aips/BurnsBRYD12} or \textit{planning for opportunities}~\cite{DBLP:conf/aips/BorrajoV21}.
More concretely, following the example in~\cite{ICAPS_KerenGK14}, the airport operator might be interested in forcing passengers to pass through some shops on the way to their gates.
It can also be useful in surveillance tasks, where one might want to constrain the surveillance agent's behaviour to pass through places where potential monitoring tasks might dynamically arrive. 
To practically endow this, we adapt the definition of planning \textit{centroids}~\cite{Centroids_MCS_PozancoEFB19,Centroids_MCS_Karpas22} to work over plans, rather than just single states.
We define the average distance of an environment
for a set of partial states and a goal state as
\textit{avgD}, as follows:

\begin{definition}\label{def:minavgd}
    Given $\envred = \langle \env = \langle \envplanning, \goals \rangle, M_b \rangle$, where $G_t \in \goals$ is a true goal, and $\goals_S = \goals \setminus \{G_t\}$ is a set of partial states of interest to reason about. Let $\state_{\Pi}$ be all the states traversed by the plans in $\Pi=\langle \envplanning, G_t \rangle$. 
The \textbf{average distance of a planning environment} is denoted as \textit{avgD}, and defined as:
    
    \begin{equation*}
        \mbox{avgD($\env$)} = \frac{\sum_{s_i \in \state_{\Pi}, G_i \in \goals_S} h^*(s_i, G_i)}{|\state_{\Pi}| \times |\goals_S|}
    \end{equation*}
\end{definition}

The \textit{avgD} of the original environment shown in Figure~\ref{fig:running_example} is $5$.
Figure~\ref{fig:min_avg_distance} shows a solution of the original environment redesign problem where the average distance is minimised ($M_{1.0} = \textit{minAvgD}$).
In the resulting environment, two actions are removed ($|\actions_\neg| = 2$), and the agent is forced to ``stay'' as close as possible to $G_2$ whilst following an optimal plan to achieve its intended goal $G_1$.
This optimal plan involves moving north four steps, followed by $2$ east steps, and traverses $7$ states, yielding $\textit{avgD}=\frac{6+5+4+3+2+3+4}{7} =3.86$.
Even if the example only shows one special goal to reason about, $G_2$, the metric works for any set of goals.

\subsubsection{Maximise Average Distance (\maxavgd).}
One could aim to redesign an environment such that agents (alternatively, humans) would be forced to ``stay'' \textit{as far as possible} from a set of potential \textit{risks} whilst achieving their goals~\cite{perny2007state,IntexPozanco20}.
This can be useful in evacuation domains, such as in the event of a volcano eruption, where the goal is to move people to a safe place while staying as far as possible from a set of dangerous areas; or in financial planning, where the aim is to achieve the user's financial goal while staying far from financial risks such as high debt.
In these cases, it is usually impossible to completely eliminate the risk (block the goal), so our assumption about all the goals being reachable (Definition~\ref{def:environment_redesign_solution}) still holds in practice.
To do so, we can \textit{Maximise Average Distance} (\textit{maxAvgD}) using Definition~\ref{def:minavgd}. 

Figure~\ref{fig:max_avg_distance} shows a solution for the environment redesign problem in Figure~\ref{fig:running_example} when using \textit{maxAvgD}, where average distance is maximised $M_{1.0} = \textit{MaxAvgD}$, $\textit{maxAvgD}=\frac{6+7+8+7+6+5+4}{7}=6.15$, and $|\actions_\neg| = 2$.
In this case, the agent is forced to stay as far as possible from $G_2$ while following an optimal plan to achieve $G_1$.

\subsubsection{Minimise Maximum Distance (\minmaxd).}
\textit{Minimise Maximum Distance} aims at redesigning an environment such that agents are ``forced'' to never stay too far from a set of partial states whilst achieving its true intended goal. It can be used in the same previous examples.We adapt the definition of planning \textit{minimum covering states}~\cite{Centroids_MCS_PozancoEFB19} over plans, and define the maximum distance of an environment \textit{maxD} as: 

\begin{definition}\label{def:minmaxd}
    Given $\envred = \langle \env = \langle \envplanning, \goals \rangle, M_b \rangle$, where $G_t \in \goals$ is a true goal, and $\goals_S = \goals \setminus \{G_t\}$ is the set of partial states to reason about. Let $\state_{\Pi}$ be all the states traversed by the plans in $\Pi=\langle \envplanning, G_t \rangle$. 
The \textbf{maximum distance of a planning environment} is denoted as \textit{maxD}, and defined as:
    \begin{equation}
        \mbox{maxD($\env$)} = \max_{s_i \in \state_{\Pi}, G_i \in \goals_S} h^*(s_i, G_i)
    \end{equation}
\end{definition}

The \textit{maxD} of the environment shown in Figure~\ref{fig:running_example} is $8$, which is achieved when the agent visits the cell $(0,0)$.
Figure~\ref{fig:min_max_distance} shows a solution for this environment redesign problem where the maximum distance is minimised ($M_{1.0} = \textit{MinMaxD}$), then we have $\textit{maxD} = 6$ and $|\actions_\neg|=2$.
This metric is different from minimising average distance. While the solution in Figure~\ref{fig:min_avg_distance} also has a \textit{maxD} of $6$, the solution in Figure~\ref{fig:min_max_distance} does not minimise \textit{avgD}.

\subsubsection{Maximise Minimum Distance (\maxmind).}
One could redesign an environment such that agents are compelled to avoid getting too close to a set of partial states whilst achieving their true goal.
Redesigning environments to optimise this metric can be useful in the same risk avoidance and evacuation domains we already mentioned.
We define this task as \textit{Maximise Minimum Distance} (\textit{maxMinD}).
We define the minimum distance of $\env$ as \textit{minD}, as follows:

\begin{definition}\label{def:maxmind}
    Given $\envred = \langle \env = \langle \envplanning, \goals \rangle, M_b \rangle$, where $G_t \in \goals$ is a true goal, and $\goals_S = \goals \setminus \{G_t\}$ is the set of partial states to reason about. Let $\state_{\Pi}$ be all the states traversed by the plans in $\Pi=\langle \envplanning, G_t \rangle$. 
The \textbf{minimum distance of a planning environment} is denoted as \textit{minD}, and defined as:
    \begin{equation}
        \mbox{minD($\env$)} = \min_{s_i \in \state_{\Pi}, G_i \in \goals_S} h^*(s_i, G_i)
    \end{equation}
\end{definition}

The \textit{minD} of the environment shown in Figure~\ref{fig:running_example} is $2$, which is achieved when the agent visits the cell $(2,4)$.
Figure~\ref{fig:max_min_distance} shows a solution of the environment redesign problem, where the minimum distance is maximised $M_{1.0} = \textit{MaxMinD}$, $\textit{minD} = 6$, and $|\actions_\neg|=2$.

\section*{Environment Redesign via Search}\label{sec:approach}

We now present \approach, a general environment redesign approach that is \textit{metric-agnostic} and employs 
an anytime \textit{Breadth-First Search} (BFS)~\cite[Section 3.3.1]{Rssell2005ai} algorithm that exploits recent research on \textit{top-quality} planning to improve the search efficiency.

\approach~is described in Algorithm~\ref{alg:bfs_redesign}, and
takes as input an environment redesign problem $\envred = \langle \env, M_b \rangle$ and a stopping condition $C$. 
\approach searches the space of environment modifications by iteratively generating and evaluating environments where an increasing number of actions is removed.
\approach~returns the
set of best solutions found $\cal M$ until $C$ is triggered, i.e., the set of different environment modifications that optimises a redesign metric $M_b$, yielding a redesigned environment with metric value $m^+$.

\subsubsection{Compute Plan-Library $\planlibrary$ (Line~\ref{alg:bfs_redesign:compute_plan_library}).} \approach~first computes a plan-library $\planlibrary(\env,b)$ for the given environment redesign problem $\envred = \langle \env, M_b \rangle$ by calling a \textsc{topQualityPlanner}.

\renewcommand{\algorithmicrequire}{\textbf{Input:}}
\renewcommand{\algorithmicensure}{\textbf{Output:}}

\begin{algorithm}[t!]
\caption{\approach: {\footnotesize A General Environment Redesign Approach}}
\label{alg:bfs_redesign}
\small
\begin{algorithmic}[1]
\REQUIRE Redesign problem $\envred = \langle \env, M_b \rangle$, $C$ stopping condition.
\ENSURE Set of solutions found ${\cal M}$, metric value found $m^+$.

\STATE $\planlibrary(\env, b) \gets \textsc{topQualityPlanner}(\env, b)$\label{alg:bfs_redesign:compute_plan_library}

\STATE $amod \gets \textsc{getAllowedModifications}( \planlibrary(\env,b), \actions, M_b)$\label{alg:bfs_redesign:compute_allowed_modifications}

\STATE $s_0 \gets \emptyset$, $\textsc{open} \gets s_0$, ${\cal M} \gets \{s_0\}$\label{alg:bfs_redesign:search_begin}
\STATE $m_0, m^+ \gets \textsc{evaluate}(s,M, \env, b)$

\WHILE{$\neg C$}\label{alg:bfs_redesign:search_stop_condition}
    \STATE $s \gets \textsc{open.dequeue}()$ \COMMENT{\//* \textit{State s with lowest $|\actions_\neg|$} */\/}
    \FOR{$a$ \textbf{in} $amod$}
        \STATE $s' \gets s \cup a $ \label{alg:bfs_redesign:adding_a}
        \IF{$\textsc{isValid}(s')$}\label{alg:bfs_redesign:validity}
            \STATE $\textsc{open.queue}(s')$
            \STATE $m' \gets \textsc{evaluate}(s',M, \env, b)$
            \IF{$\textsc{isBetter}(m',m^+)$}\label{alg:bfs_redesign:search_metric_check}
                \STATE $m^+ \gets m'$, ${\cal M} \gets \{s'\}$
            \ELSIF{$m' = m^+$ \AND $|s'| = |s''|, ~st.~s'' \in~{\cal M}$}\label{alg:bfs_redesign:search_metric_check2}
                \STATE ${\cal M} \gets {\cal M} \cup \{s'\}$\label{alg:bfs_redesign:search_modifications}
\ENDIF
        \ENDIF
    \ENDFOR

\ENDWHILE

\RETURN ${\cal M}, m^+$\label{alg:bfs_redesign:search_end}
\end{algorithmic}
\end{algorithm}

\subsubsection{Compute Allowed Modifications (Line~\ref{alg:bfs_redesign:compute_allowed_modifications}).}
After computing $\planlibrary(\env,b)$, \approach~then computes the set of allowed modifications for the given environment by using the $\textsc{getAllowedModifications}$ function, taking as input a plan-library $\planlibrary$, a set of actions $\actions$, and a metric $M_b$ to be optimised.
Depending on the metric, this function can either return all the actions in $\actions$, or only the subset of actions that appear in the plan-library $\planlibrary$, thus pruning the space of modifications.
\approach~only reasons over the actions in the plan-library $\planlibrary$ for optimising GT (\textit{wcd}), GP (\textit{wcnd}), PT (\textit{wcpd}) or PP (\textit{wcpnd}), as removing actions that do not appear in the plan-library does not affect these metrics.
In addition, \approach~reasons over all the possible actions in $\actions$ when optimising the distance-related metrics (\textit{minAvgD}, \textit{maxAvgD}, \textit{minMaxD}, \textit{maxMinD}), as removing actions that do not appear in the agent's optimal plans that achieve the true intended goal might affect and influence directly these metrics (see Figure~\ref{fig:max_min_distance}, where $\actions_\neg$ includes actions that are not part of any optimal plan that achieves $G_1$).

\subsubsection{Searching Process (Lines \ref{alg:bfs_redesign:search_begin}--\ref{alg:bfs_redesign:search_end}).}
With the computation of the plan-library $\planlibrary(\env,b)$ and the allowed modifications properly in place,
\approach~initialises the search structures, and then conducts a BFS search until the stopping condition $C$ is met (Line~\ref{alg:bfs_redesign:search_stop_condition}).
Most existing algorithms only stop when the best possible value for a metric is achieved~\cite{JARI_KerenGK19,TIST_MirskyGSK19}.
While defining this best possible value is easy for some metrics, i.e., $wcd=0$ when optimising GT, this value is not easy to be properly defined for all metrics.
Namely, it is infeasible to know in advance the lowest or highest average distance that we can achieve when redesigning an environment.
Hence, we generalise the stopping conditions in the literature and assume $C$ can represent any formula, such as a time limit or memory limit, a bound on the number of removed actions, or an improvement ratio of the metric with respect to its original value.

In each iteration, \approach~gets the best node from the open list $\textsc{open}$ according to its $g$-$value$, defined as the size of the removed actions set $|\actions_\neg|$.
Then, \approach~generates the successors of the current node $s$ by adding removable actions in $amod$ to the current node's removed actions' set (Line~\ref{alg:bfs_redesign:adding_a}).
Before appending the new node $s'$ to $\textsc{open}$, \approach~checks if it is valid, verifying that all the goals in $\goals$ are still achievable in the resulting environment after removing the actions in $s'$.
If $s'$ is a valid node, \approach~computes the value of the metric $M_b$ for that node, $m'$, using the \textsc{evaluate} function.
This function assesses the quality of the environment obtained after removing the actions in $s'$, using for example any of the metrics proposed in Definitions~\ref{def:wcd_goal}--\ref{def:maxmind}.
If $m'$ \textsc{isBetter} than the best metric value $m^+$ found so far (lower when minimising, higher when maximising), then this value is replaced, and the set of environment modifications $\cal M$ is updated (Lines~\ref{alg:bfs_redesign:search_metric_check}--\ref{alg:bfs_redesign:search_modifications}).
If $m'$ is equal to $m^+$ and node $s'$ has the same number of modifications (same $g$-$value$) as those nodes in $\cal M$, then node $s'$ is included in the set of environment modifications $\cal M$ (Line~\ref{alg:bfs_redesign:search_modifications}).
Finally, \approach~terminates when the condition $C$ is met (Line~\ref{alg:bfs_redesign:search_stop_condition}), returning the best solutions found (environment modifications $\cal M$), and the best value $m^+$ for the redesign metric $M$ in these solutions.

\subsubsection{Theoretical Properties.} 

In Theorem~\ref{thm:ger_theory}, we give the bases and provide the guarantees to prove that our general environment redesign approach $\approach$, described in Algorithm~\ref{alg:bfs_redesign}, is \textit{sound}, \textit{complete}, and \textit{optimal} under certain assumptions.

\begin{theorem}\label{thm:ger_theory}
    Let us assume $C=\{ \textsc{open}=\emptyset \}$, $amod = \actions$, and infinite planning time and memory resources.
    Under those assumptions, \approach 
    is \textbf{sound}, \textbf{complete}, and \textbf{optimal}.
\end{theorem}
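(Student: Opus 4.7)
I would verify soundness, completeness, and optimality in turn, exploiting the fact that under the stated hypotheses the main loop becomes an exhaustive BFS over the subset lattice of $\actions$. Let $m^*$ be the best achievable value of $M_b$ over all valid removal sets (those that preserve reachability of every $G \in \goals$), and set $k^* = \min\{|\actions_\neg| : \actions_\neg \text{ is valid and achieves } m^*\}$, with $\mathcal{O}^*$ the collection of all such minimum-size optimal removal sets. Soundness is then immediate: nodes enter $\cal M$ only via Line~\ref{alg:bfs_redesign:search_begin} (the empty modification, which trivially preserves reachability) or through the block starting at Line~\ref{alg:bfs_redesign:search_metric_check}, which is gated by the \textsc{isValid}$(s')$ test on Line~\ref{alg:bfs_redesign:validity}; since \textsc{isValid} encodes Definition~\ref{def:environment_redesign_solution}'s reachability requirement, every member of $\cal M$ induces a valid redesigned environment.

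For completeness, note that with $amod = \actions$ the inner loop generates $s \cup \{a\}$ for every $a \in \actions$ from each dequeued $s$. Starting from $s_0 = \emptyset$ and running until $\textsc{open} = \emptyset$, under infinite time and memory a straightforward induction on $|\actions_\neg|$ shows BFS dequeues, and hence evaluates, every subset of $\actions$ exactly once. Thus every candidate removal set---valid or not---is examined, and in particular every valid one is considered for inclusion in $\cal M$.

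The substantive step is optimality, which I would argue in two parts. First, BFS expands nodes in non-decreasing $g$-value and evaluates each successor at the instant of generation, so every node of size strictly less than $k^*$ is evaluated before any node of size $k^*$; by the definition of $k^*$, none of these achieves $m^*$. Second, when the first $s^* \in \mathcal{O}^*$ is produced (necessarily at size $k^*$), the \textsc{isBetter} test on Line~\ref{alg:bfs_redesign:search_metric_check} fires, setting $m^+ = m^*$ and resetting $\cal M = \{s^*\}$; every other element of $\mathcal{O}^*$ also has size $k^*$, so upon generation it passes the equal-metric, equal-$g$-value check of Line~\ref{alg:bfs_redesign:search_metric_check2} and joins $\cal M$. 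Conversely, any node whose metric is strictly worse than $m^*$ fails \textsc{isBetter}, and any node achieving $m^*$ with size greater than $k^*$ fails the size-equality check. Since the subset lattice of $\actions$ is finite, the algorithm terminates and returns $\cal M = \mathcal{O}^*$ together with $m^+ = m^*$, matching Definition~\ref{def:environment_redesign_optimal_solution}.

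The delicate step---and the main obstacle I anticipate---is the interaction between the tie-breaking rule in Line~\ref{alg:bfs_redesign:search_metric_check2} and the fact that $m^+$ may be overwritten several times during the run. One must check that every overwrite happens at the smallest $g$-value witnessing the new value of $m^+$, so that subsequent additions via Line~\ref{alg:bfs_redesign:search_metric_check2} cannot silently admit a suboptimal-size solution of the same metric. The BFS layering property (all nodes of $g$-value $k$ are evaluated before any node of $g$-value $k+1$) resolves this cleanly, but it is the pivotal invariant of the proof and worth isolating explicitly before concluding.
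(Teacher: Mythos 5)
Your proposal is correct and follows essentially the same route as the paper's proof: soundness from the \textsc{isValid} gate, completeness from exhaustive exploration of the removal-set lattice under $amod=\actions$, and optimality from the checks in Lines~\ref{alg:bfs_redesign:search_metric_check} and~\ref{alg:bfs_redesign:search_metric_check2}. The only difference is that you make explicit the BFS layering invariant (all nodes of a given $g$-value are evaluated before any node of larger $g$-value, so $m^+$ is always first attained at minimal $|\actions_\neg|$), which the paper's argument leaves implicit.
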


\begin{proof}
    \approach is \textit{sound} due to the fact that it checks the validity of each node (Line~\ref{alg:bfs_redesign:validity}) before adding it to \textsc{open}. Thus, it only includes in the solution set $\cal M$, nodes that entail new environments where all goals in $\goals$ are reachable (Definition~\ref{def:environment_redesign_solution}). Given infinite resources, since it explores the full state space ($amod$ contains all possible actions), if there is a solution, it will find it, so
    \approach is \textit{complete}. It is also \textit{optimal} given that it is complete, and it ensures that $\cal M$ only contains solutions with optimal metric values that entail minimal modifications to the environment (Definition~\ref{def:environment_redesign_optimal_solution}) in Lines~\ref{alg:bfs_redesign:search_metric_check} and~\ref{alg:bfs_redesign:search_metric_check2}.
\end{proof}

\approach can also keep the above theoretical properties under less restrictive assumptions, depending on the metric.
In the case of GT, PT, GP, and PP, metrics defined over how agents achieve their goals, \approach can search in a smaller space, that is defined by $amod=\planlibrary(\env, b)$. 
The only action removals that affect the value of these metrics are those in the plan-library.
Assuming the plan-library is finite, and contains all the plans up to a sub-optimality bound $b$, then \approach preserves \textit{soundness}, \textit{completeness}, and \textit{optimality}.

\section*{Experiments and Evaluation}\label{sec:experiments_evaluation}

\begin{table*}[!ht]
    \centering
    \setlength\tabcolsep{2pt}
    \renewcommand{\arraystretch}{1.2}
    \fontsize{9}{9}\selectfont

    \begin{tabular}{llllllll}
        \toprule

        & \multicolumn{7}{c}{\textbf{\goaltransparency} $\Downarrow$ (\textit{wcd})}

        \\ 
        
        \cmidrule[\heavyrulewidth]{2-8}

        & \multicolumn{3}{c}{\approach} & & \multicolumn{3}{c}{\grd-\textit{LS}}                                                          
        \\ \cmidrule[\heavyrulewidth]{2-4} \cmidrule[\heavyrulewidth]{6-8}
       
        \# {\it domain} & \multicolumn{1}{c}{\planningtime} & \multicolumn{1}{c}{\initialmetric} & \multicolumn{1}{c}{\improvement} & \multicolumn{1}{c}{} 
        & \multicolumn{1}{c}{\planningtime} & \multicolumn{1}{c}{\initialmetric} & \multicolumn{1}{c}{\improvement} 
        
        \\ \cmidrule{2-4} \cmidrule{6-8}

        \rowcolor{grayline}\blocks         
        & 1.0/0.2 & 5.2/2.1 & 3.7/1.9& & 63.7/59.5 & 5.2/2.1 & 3.7/1.9 \\
        
        \depots         
        & 1.0/0.0 & 5.0/0.0 & 4.0/0.0 && 69.0/0.0 & 5.0/0.0 & 4.0/0.0 \\
        
        \rowcolor{grayline}\gridnavigation 
        & 8.1/8.6 & 4.2/1.5 & 1.8/1.3 && 345.0/378.8 & 4.2/1.5 & 2.2/1.0 \\
        
        \ipcgrid        
        & 1.1/0.2 & 11.1/10.0 & 8.5/10.4 && 119.9/145.5 & 11.1/10.0 & 8.5/10.4\\
        
        \rowcolor{grayline}\logistics
        & - & - & - & & - & - & - \\
        \bottomrule
    \end{tabular}

    \begin{tabular}{lllllllllllllllll}
        \toprule
        
        & \multicolumn{1}{c}{} & \multicolumn{2}{l}{\bf ~\plantransparency $\Downarrow$}                                                 
        & \multicolumn{1}{c}{} & \multicolumn{3}{c}{\bf \goalprivacy $\Uparrow$}                                                 
        & \multicolumn{1}{c}{} & \multicolumn{3}{c}{\bf \planprivacy $\Uparrow$}                                        
        \\
        
        & \multicolumn{1}{c}{} & \multicolumn{2}{l}{\it (wcpd)}                                              
        & \multicolumn{1}{c}{} & \multicolumn{3}{c}{\it (wcnd)}                                              
        & \multicolumn{1}{c}{} & \multicolumn{3}{c}{\it (wcpnd)}
        \\ 
        
        \cmidrule[\heavyrulewidth]{2-4}
        \cmidrule[\heavyrulewidth]{6-8}
        \cmidrule[\heavyrulewidth]{10-12}
        
        & \multicolumn{3}{c}{\approach}                                                   &                      
        & \multicolumn{3}{c}{\approach}                                                   &                      
        & \multicolumn{3}{c}{\approach}                                                 
        \\ \cmidrule[\heavyrulewidth]{2-4}
        \cmidrule[\heavyrulewidth]{6-8}
        \cmidrule[\heavyrulewidth]{10-12}
       
        \# {\it domain} & \multicolumn{1}{c}{\planningtime} & \multicolumn{1}{c}{\initialmetric} & \multicolumn{1}{c}{\improvement} & \multicolumn{1}{c}{} 
        & \multicolumn{1}{c}{\planningtime} & \multicolumn{1}{c}{\initialmetric} & \multicolumn{1}{c}{\improvement} & \multicolumn{1}{c}{} 
        & \multicolumn{1}{c}{\planningtime} & \multicolumn{1}{c}{\initialmetric} & \multicolumn{1}{c}{\improvement} & \multicolumn{1}{c}{} 
        
        \\ \cmidrule{2-4} \cmidrule{6-8} \cmidrule{10-12}

        \rowcolor{grayline}\blocks         
        & 1.0/0.2 & 5.3/2.0 & 3.8/2.0 && 1.0/0.2 & 2.4/1.3 & 4.4/1.8 && 1.0/0.2 & 2.4/1.3 & 4.4/1.8 \\
        
        \depots         
        & 1.2/0.3 & 6.5/1.0 & 5.2/1.0 && 1.1/0.3 & 0.0/0.0 & 4.0/1.5& & - & - & - \\
        
        \rowcolor{grayline} \gridnavigation 
        & 60.1/170.7 & 4.0/1.2 & 1.8/1.3 && 19.1/74.9 & 0.0/0.0 & 1.4/0.6 && 1.4/0.4 & 0.0/0.0 & 1.4/0.6\\
        
        \ipcgrid        
        & 1.1/0.4 & 11.1/11.1 & 7.7/10.1 && 0.9/0.1 & 2.0/0.0 & 3.0/0.0 && 0.9/0.2 & 2.0/0.0 & 4.0/2.6\\
        
        \rowcolor{grayline}\logistics
        & - & - & - && 
        150.2/210.6 & 0.0/0.0 & 10.0/0.0 && - & - & -
        \\
        \bottomrule
    \end{tabular}

    \begin{tabular}{llllllllllllllllll}
        \toprule                                       
        & \multicolumn{1}{c}{} & \multicolumn{2}{l}{\bf \minavgd $\Downarrow$}                                            
        & \multicolumn{1}{c}{} & \multicolumn{3}{c}{\bf \maxavgd $\Uparrow$}                                            
        & \multicolumn{1}{c}{} & \multicolumn{3}{c}{\bf \minmaxd $\Downarrow$}                                            
        & \multicolumn{1}{c}{} & \multicolumn{3}{c}{\bf \maxmind $\Uparrow$}                                  
        \\
                                                 
        & \multicolumn{1}{c}{} & \multicolumn{2}{l}{~\it (minAvgD)}                                              
        & \multicolumn{1}{c}{} & \multicolumn{3}{c}{\it (maxAvgD)}                                              
        & \multicolumn{1}{c}{} & \multicolumn{3}{c}{\it (minMaxD)}                                              
        & \multicolumn{1}{c}{} & \multicolumn{3}{c}{\it (maxMinD)}     \\ 

        \cmidrule[\heavyrulewidth]{2-4}
        \cmidrule[\heavyrulewidth]{6-8}
        \cmidrule[\heavyrulewidth]{10-12}
        \cmidrule[\heavyrulewidth]{14-16}
                     
        & \multicolumn{3}{c}{\approach}                                                   &                      
        & \multicolumn{3}{c}{\approach}                                                   &                      
        & \multicolumn{3}{c}{\approach}                                                   &                      
        & \multicolumn{3}{c}{\approach}                                         
        
        \\ \cmidrule[\heavyrulewidth]{2-4}
        \cmidrule[\heavyrulewidth]{6-8}
        \cmidrule[\heavyrulewidth]{10-12}
        \cmidrule[\heavyrulewidth]{14-16}
       
        \# {\it domain} & \multicolumn{1}{c}{\planningtime} & \multicolumn{1}{c}{\initialmetric} & \multicolumn{1}{c}{\improvement} & \multicolumn{1}{c}{} 
        & \multicolumn{1}{c}{\planningtime} & \multicolumn{1}{c}{\initialmetric} & \multicolumn{1}{c}{\improvement} & \multicolumn{1}{c}{} 
        & \multicolumn{1}{c}{\planningtime} & \multicolumn{1}{c}{\initialmetric} & \multicolumn{1}{c}{\improvement} & \multicolumn{1}{c}{} 
        & \multicolumn{1}{c}{\planningtime} & \multicolumn{1}{c}{\initialmetric} & \multicolumn{1}{c}{\improvement} 
        
        \\ \cmidrule{2-4}
        \cmidrule{6-8}
        \cmidrule{10-12}
        \cmidrule{14-16}

        \rowcolor{grayline}\blocks         
        & 114.7/105.6 & 8.2/0.8 & 7.9/0.9 && 84.7/89.3 & 8.4/1.1 & 8.9/1.1&& 91.2/93.0 & 12.7/1.5 & 11.2/1.6 && 70.1/36.8 & 3.2/0.8 & 4.8/0.7\\
        
        \depots         
        & 52.8/28.7 & 7.0/0.9 & 6.6/1.0 && 104.3/111.7 & 7.1/0.8 & 7.5/0.8 && 48.3/22.7 & 12.1/1.4 & 9.8/0.4 && 54.0/23.4 & 2.6/2.1 & 4.0/1.9\\
        
        \rowcolor{grayline}\gridnavigation 
        & 266.0/245.1 & 4.3/0.9 & 4.0/1.0 && 383.8/255.8 & 4.3/1.0 & 4.9/1.1 && 152.0/192.1 & 7.9/1.3 & 6.7/1.5 && 232.3/175.8 & 0.7/1.0 & 2.0/1.1\\
        
        \ipcgrid        
        & 167.4/193.3 & 13.5/3.8 & 13.2/3.8 && 173.8/218.1 & 11.1/3.7 & 11.6/3.8 && 29.8/4.2 & 19.3/3.8 & 17.7/3.2 && 167.7/239.9 & 3.1/3.1 & 4.2/3.0\\
        
        \rowcolor{grayline}\logistics
        & 409.8/292.3 & 10.1/1.2 & 9.6/1.0 && 410.6/334.2 & 9.9/1.5 & 10.5/1.4 && - & - & - && 314.0/233.8 & 3.3/1.5 & 5.0/1.0 
        \\
        
        \bottomrule
    \end{tabular}
    \caption{Experimental results for the eight different redesign metrics. Each cell represents \textit{avg}/\textit{std} values for the redesign metrics. Cells with ``-'' mean that the redesign metric could not be improved for any problem in the domain within the time limit of 15 minutes (900 seconds). $\Downarrow$ represents reducing $m_0$, whereas $\Uparrow$ represents increasing $m_0$.}
    \label{tab:aggregated_results}
\end{table*} 

We now present the experiments carried out to evaluate our environment redesign approach \approach.
The aim of the evaluation is twofold: (1) compare \approach against state-of-the-art approaches for \grd~\cite{ICAPS_KerenGK14} when optimising \textit{wcd}; and (2) show \approach's performance when optimising the metrics we introduced in Section~\ref{subsec:metrics}. 

\subsubsection{Benchmarks and Setup.}

We wanted experiment and evaluate \approach in the same benchmarks introduced by~\citet{ICAPS_KerenGK14,JARI_KerenGK19}, but we were unable to obtain such benchmarks either from the public repository on GitHub nor from the authors.
Therefore, we have created a novel benchmark that contains $300$ planning environment problems equally split across the following five well-known domains: \blocks~words, \depots, \gridnavigation, \ipcgrid, and \logistics.
The number of possible goals varies in size, so we have environments with 3, 4, and 5 possible goals, and on average we have 4 possible goals over the five different benchmarks.
For the metrics where this is relevant, the true goal $G_t$ is selected as the first goal in $\goals$.
The environments are encoded in PDDL (Planning Domain Definition 
Language)~\cite{PDDLMcdermott1998}.
We generate $8$ redesign problems for each environment by varying the metric $M_b$ that should be optimised, using the metrics defined in Definitions~\ref{def:wcd_goal} to~\ref{def:maxmind}.
This gives us $300 \times 8 = 2400$ planning environment redesign problems.

\approach~uses \textsc{sym-k}~\cite{von2022loopless}, a state-of-the-art top-quality planner, to compute the plan-library.
We run \textsc{sym-k} with a bound of $1.0$, i.e., we only perform experiments with optimal agent behavior, although all our metrics support arbitrary sub-optimality bounds.
We also set a limit of $1,000$ plans to prevent disk overflows and avoid \approach spending all the time computing the plan-library in redesign problems with a large number of optimal plans.
For the subset of $300$ environment redesign problems, where the aim is minimising \textit{wcd}, we compare \approach against the most efficient \grd~approach (\textit{latest-split}) of \citet{ICAPS_KerenGK14}, denoted as \grd-\textit{LS}.
We execute this code as taken from the repository with \textsc{fast-downward}~\cite{FastDownward_Helmert06} as the planner used to solve the compiled planning problems, and with a design budget (maximum number of actions that can be jointly removed from the environment) of $5$.
Benchmarks, \approach's code, and further results are available on GitHub\footnote{\scriptsize \url{https://github.com/ramonpereira/general-environment-redesign}}.
We have run all experiments using 4vCPU AMD EPYC 7R13 Processor 2.95GHz with 32GB of RAM, and run \approach on each environment redesign problem with $C=\{$\textit{time limit} = 900s or \mbox{\textit{memory limit} = 4GB}$\}$. We used the same stopping condition $C$ for both \approach and \grd-\textit{LS}.

\subsubsection{Execution Time Results and Comparison against GRD.}

Table~\ref{tab:aggregated_results} shows our results, using the following metrics for evaluation: $T$, the time (seconds) to find the best solution; $m_0$, the metric value of the original environment; and $m^+$, the metric value of the environment returned as a solution.
We only report results for the subset of problems for which the given metric could be improved within the time and memory limits.
In the case of \goaltransparency, we only report results over commonly solved problems, i.e., those problems for which both \approach and \grd-\textit{LS} can improve the given metric.

\begin{figure*}[!ht]
    \centering
    \begin{subfigure}[b]{0.22\textwidth}
        \centering
        \includegraphics[width=\columnwidth]{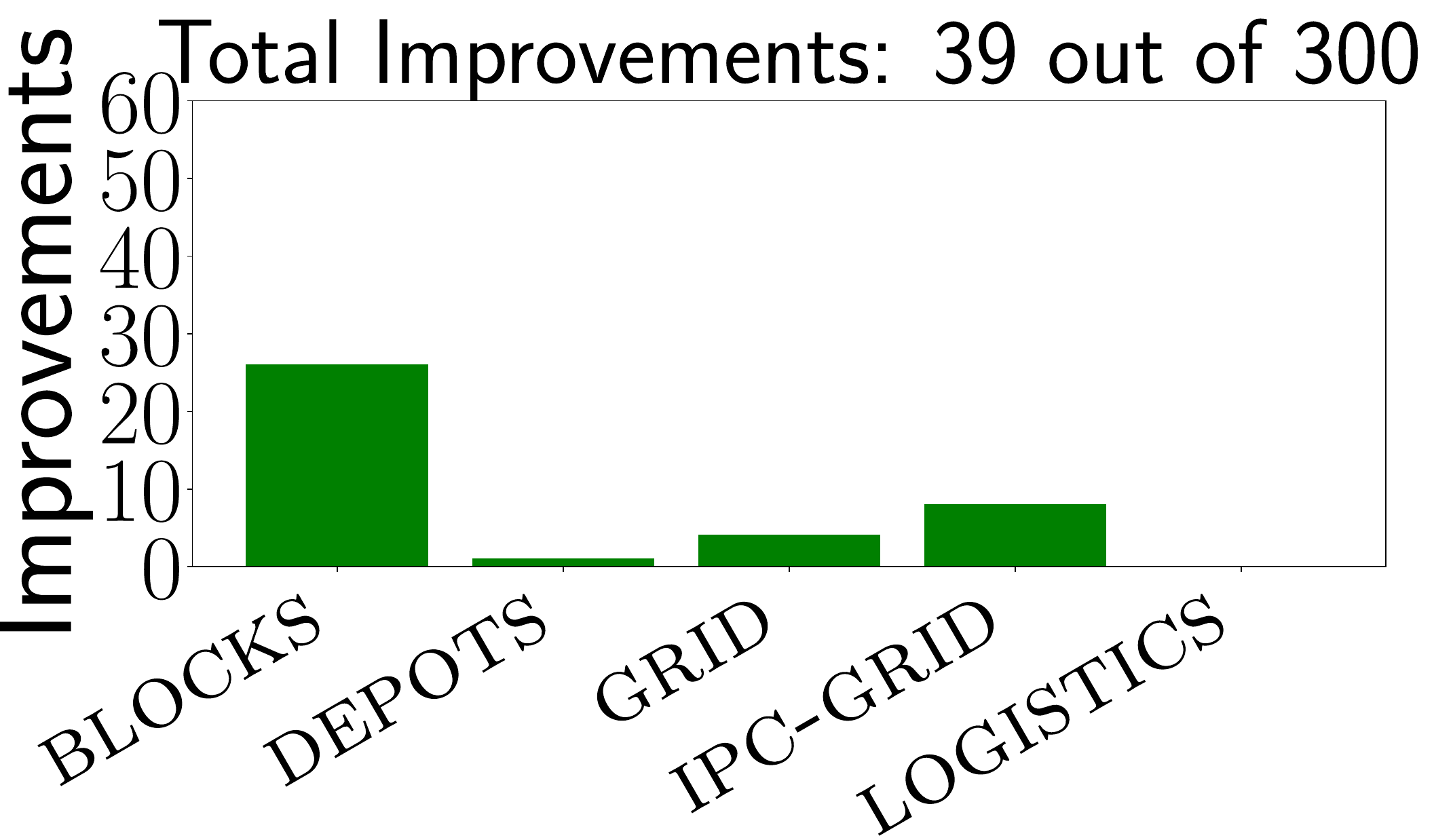}
        \caption{\it Goal Transparency.}
        \label{fig:result_goal_transparency}
    \end{subfigure}
    \hfill
    \begin{subfigure}[b]{0.22\textwidth}
        \centering
        \includegraphics[width=\columnwidth]{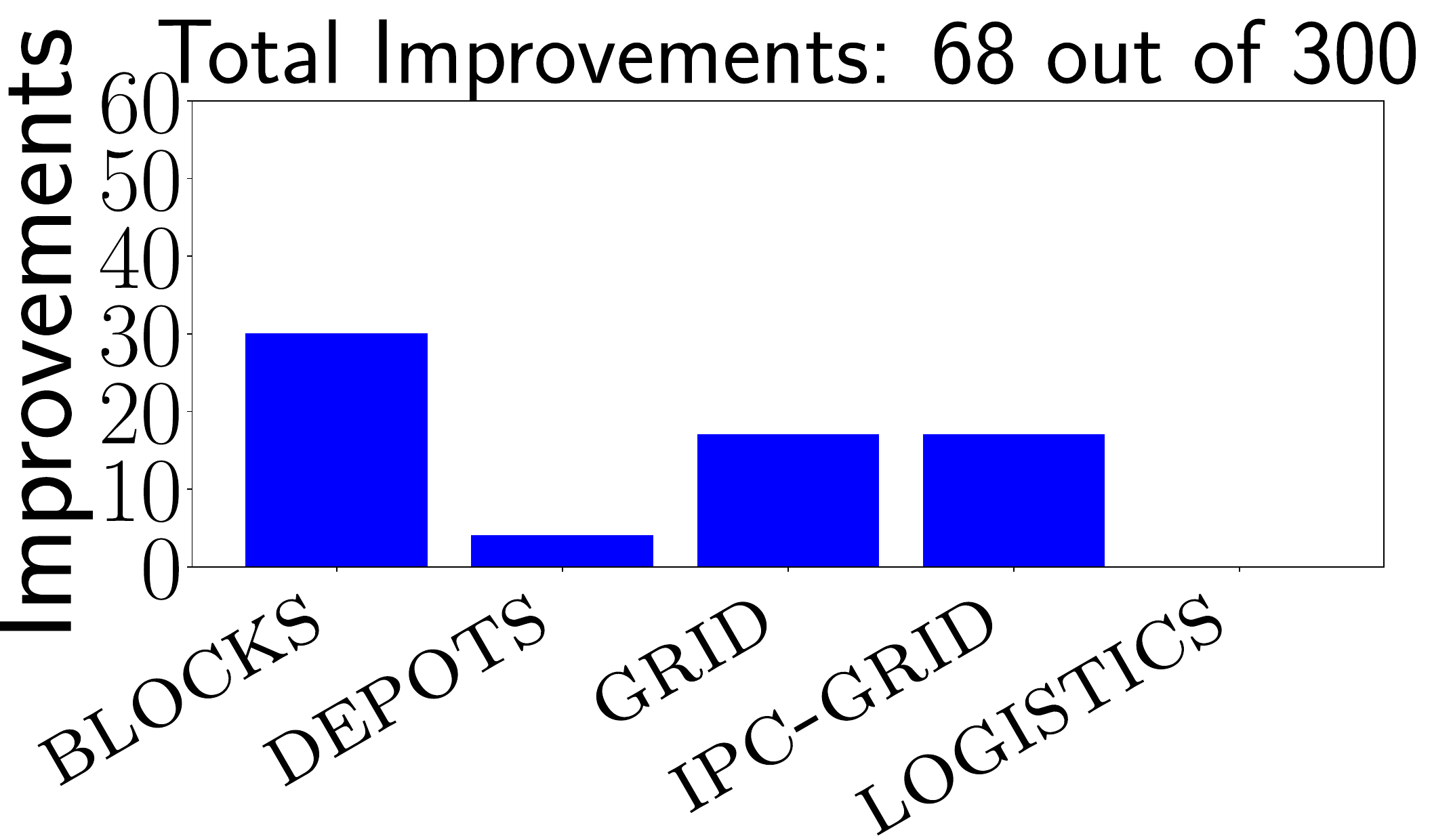}
        \caption{\it Plan Transparency.}
        \label{fig:result_plan_transparency}
    \end{subfigure}
    \hfill
    \begin{subfigure}[b]{0.22\textwidth}
        \centering
        \includegraphics[width=\columnwidth]{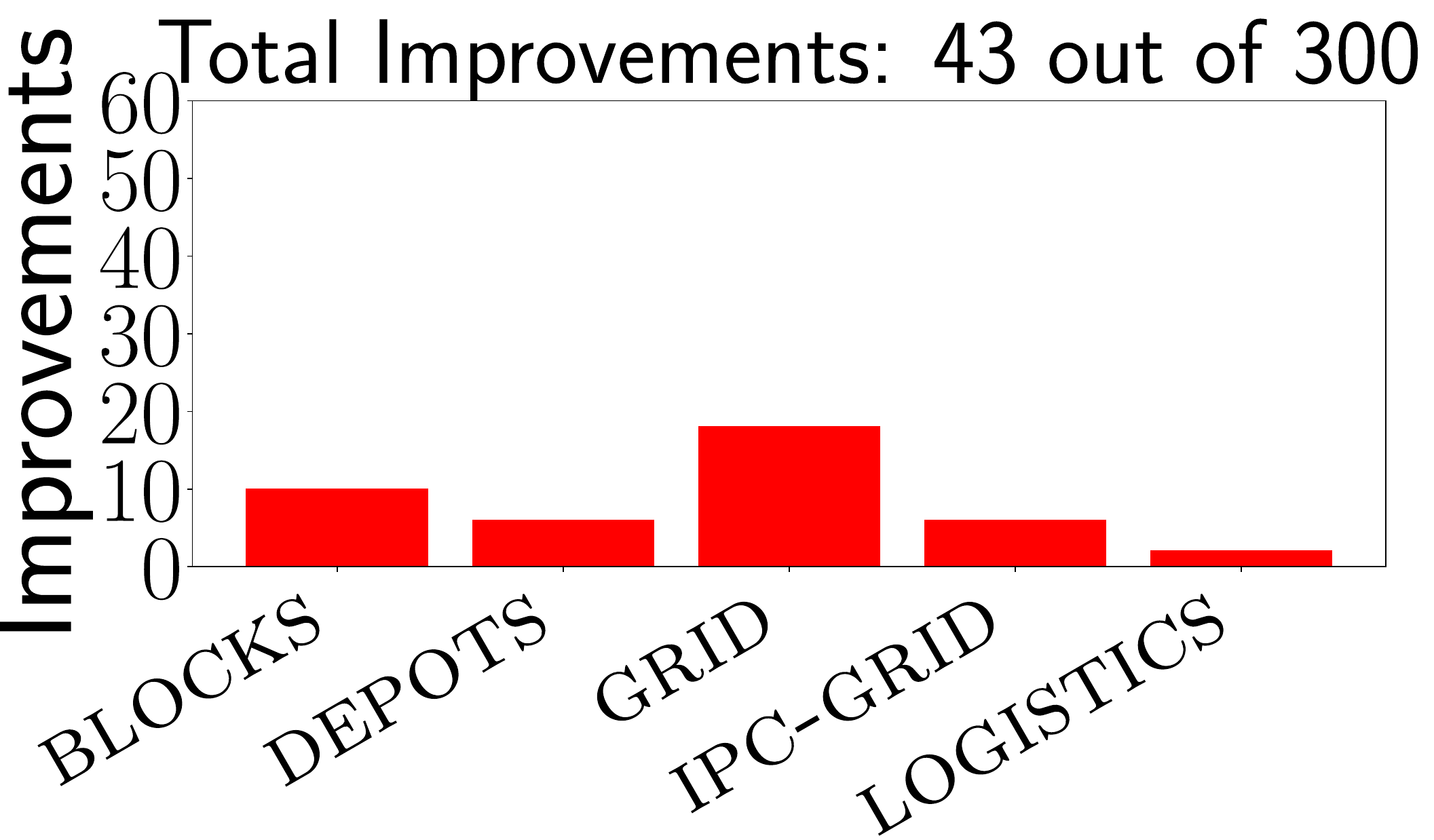}
        \caption{\it Goal Privacy.}
        \label{fig:result_goal_privacy}
    \end{subfigure}
    \hfill
    \begin{subfigure}[b]{0.22\textwidth}
        \centering
        \includegraphics[width=\columnwidth]{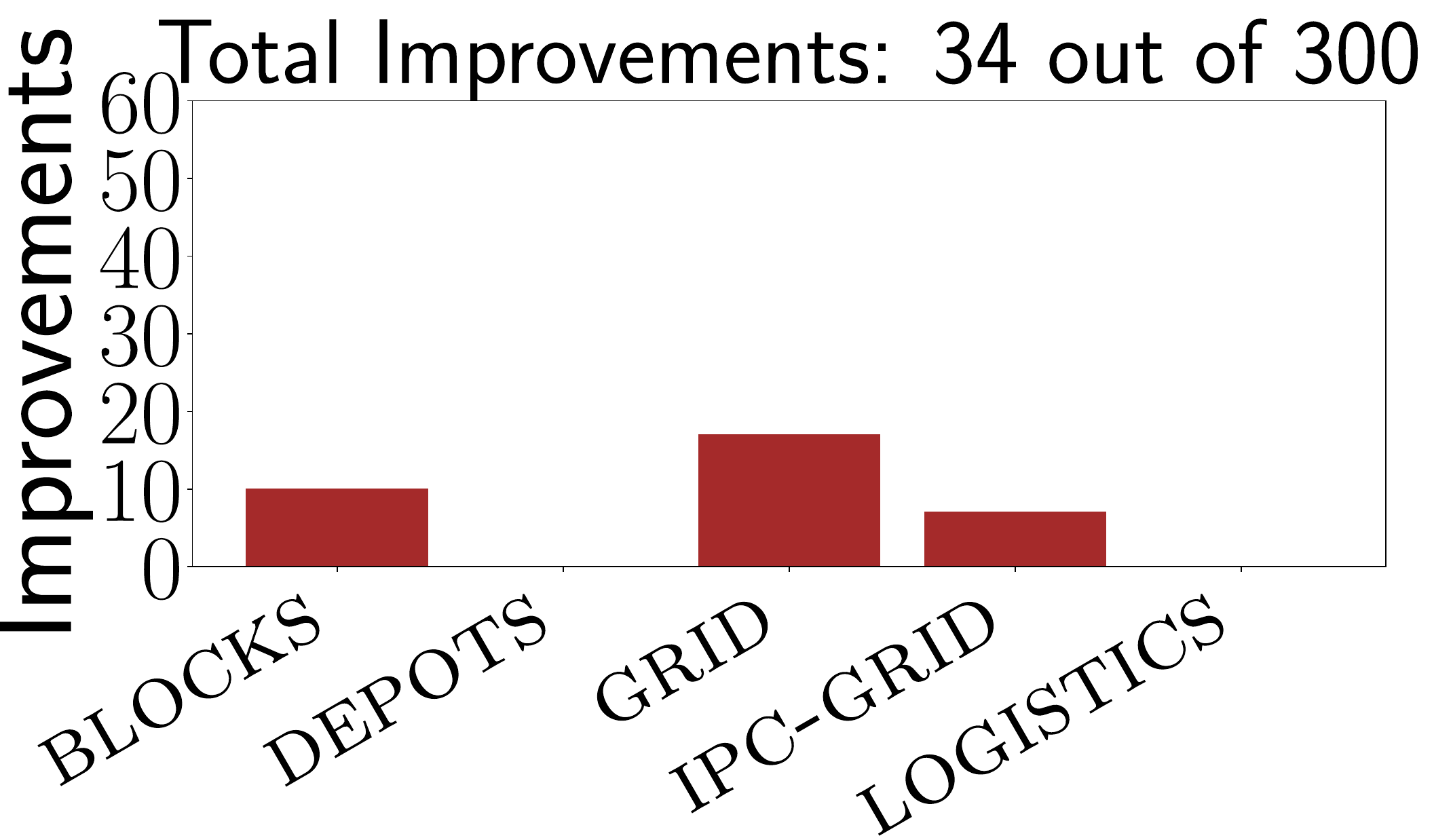}
        \caption{\it Plan Privacy.}
        \label{fig:result_plan_privacy}
    \end{subfigure}
    \\
    \begin{subfigure}[b]{0.22\textwidth}
        \centering
        \includegraphics[width=\columnwidth]{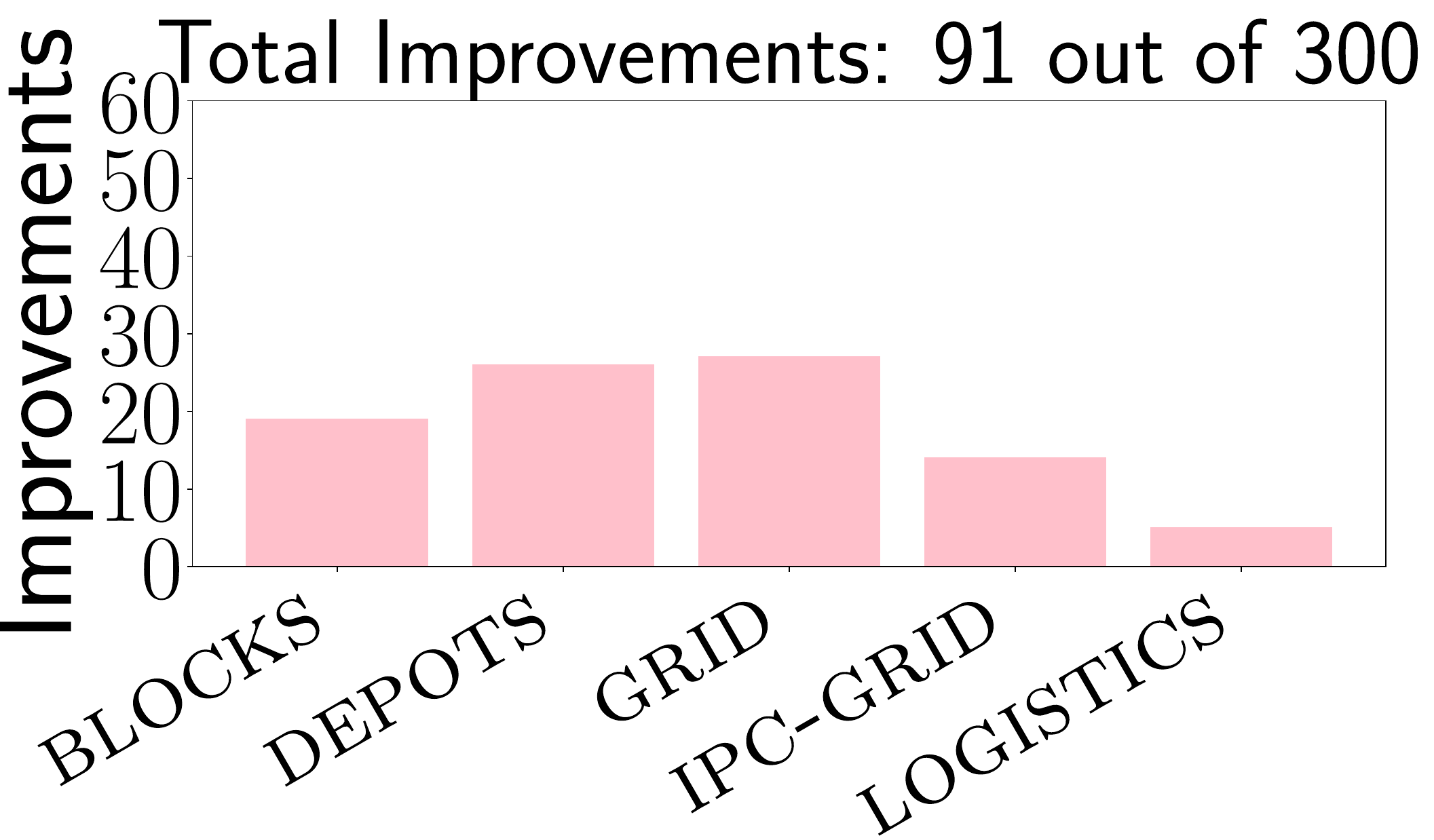}
        \caption{\it Min. Avg. Distance.}
        \label{fig:result_minAvgD}
    \end{subfigure}
    \hfill
    \begin{subfigure}[b]{0.22\textwidth}
        \centering
        \includegraphics[width=\columnwidth]{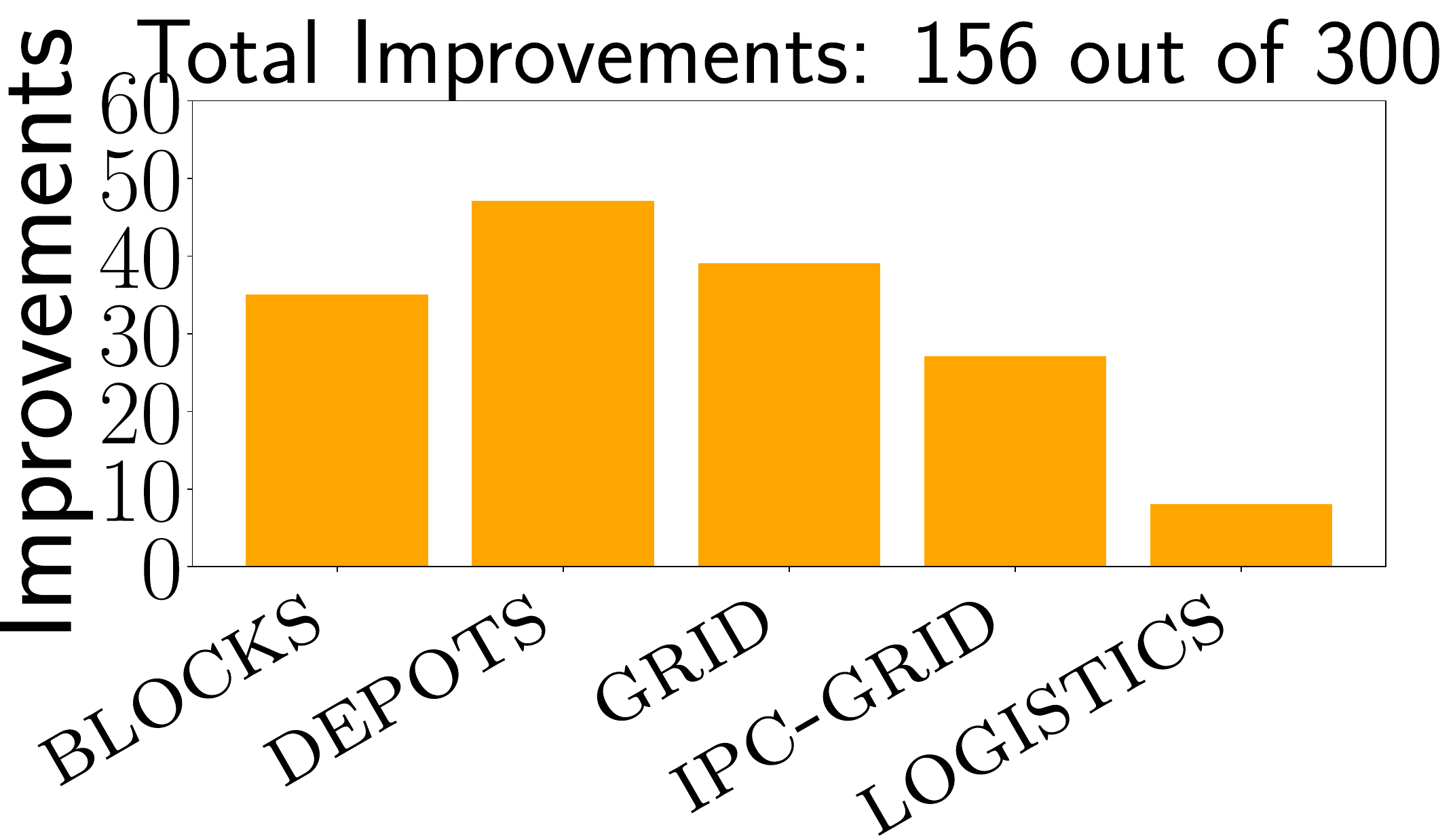}
        \caption{\it Max. Avg. Distance.}
        \label{fig:result_maxAvgD}
    \end{subfigure}
    \hfill
    \begin{subfigure}[b]{0.22\textwidth}
        \centering
        \includegraphics[width=\columnwidth]{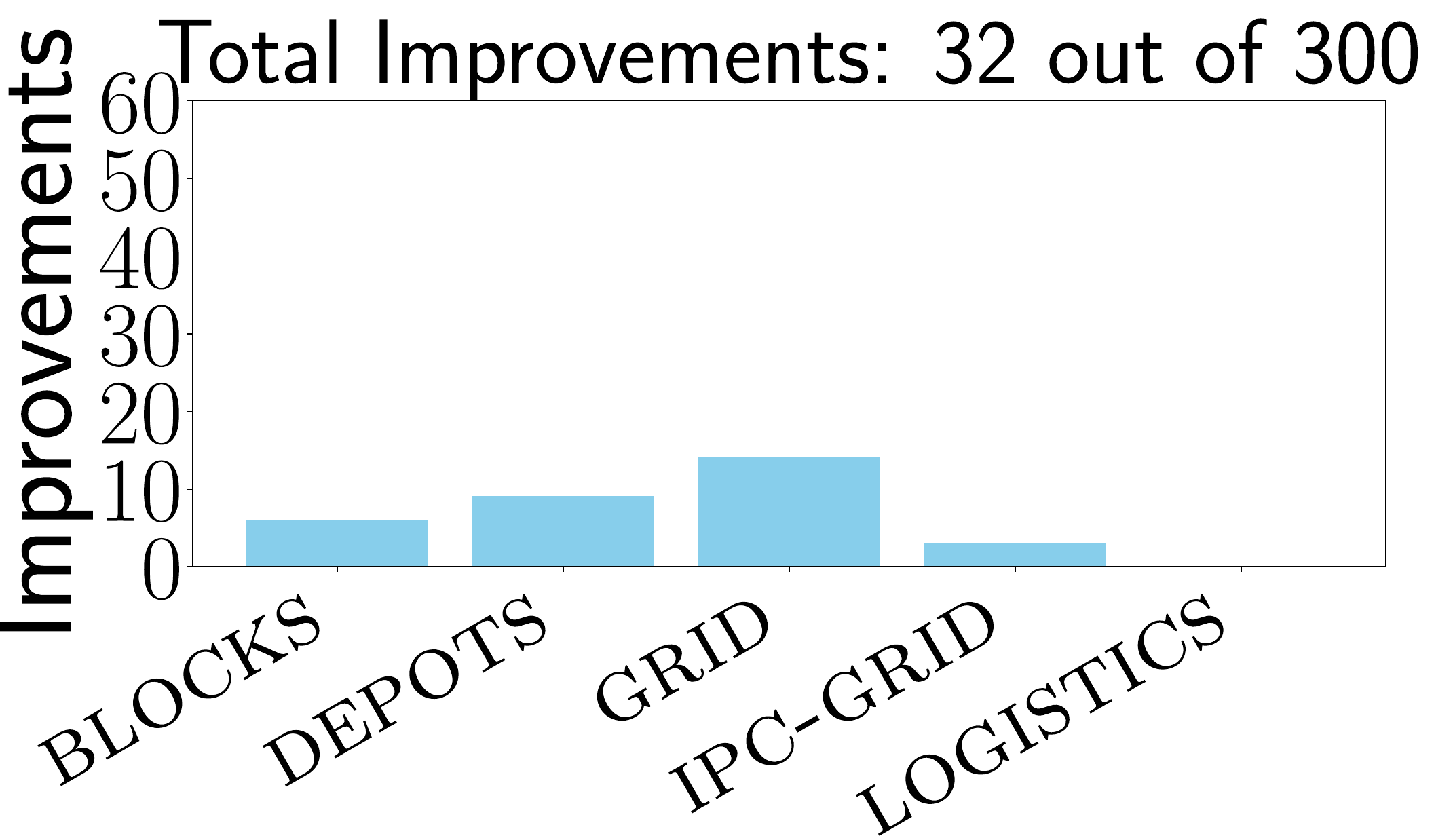}
        \caption{\it Min. Max. Distance.}
        \label{fig:result_minMaxD}
    \end{subfigure}
    \hfill
    \begin{subfigure}[b]{0.22\textwidth}
        \centering
        \includegraphics[width=\columnwidth]{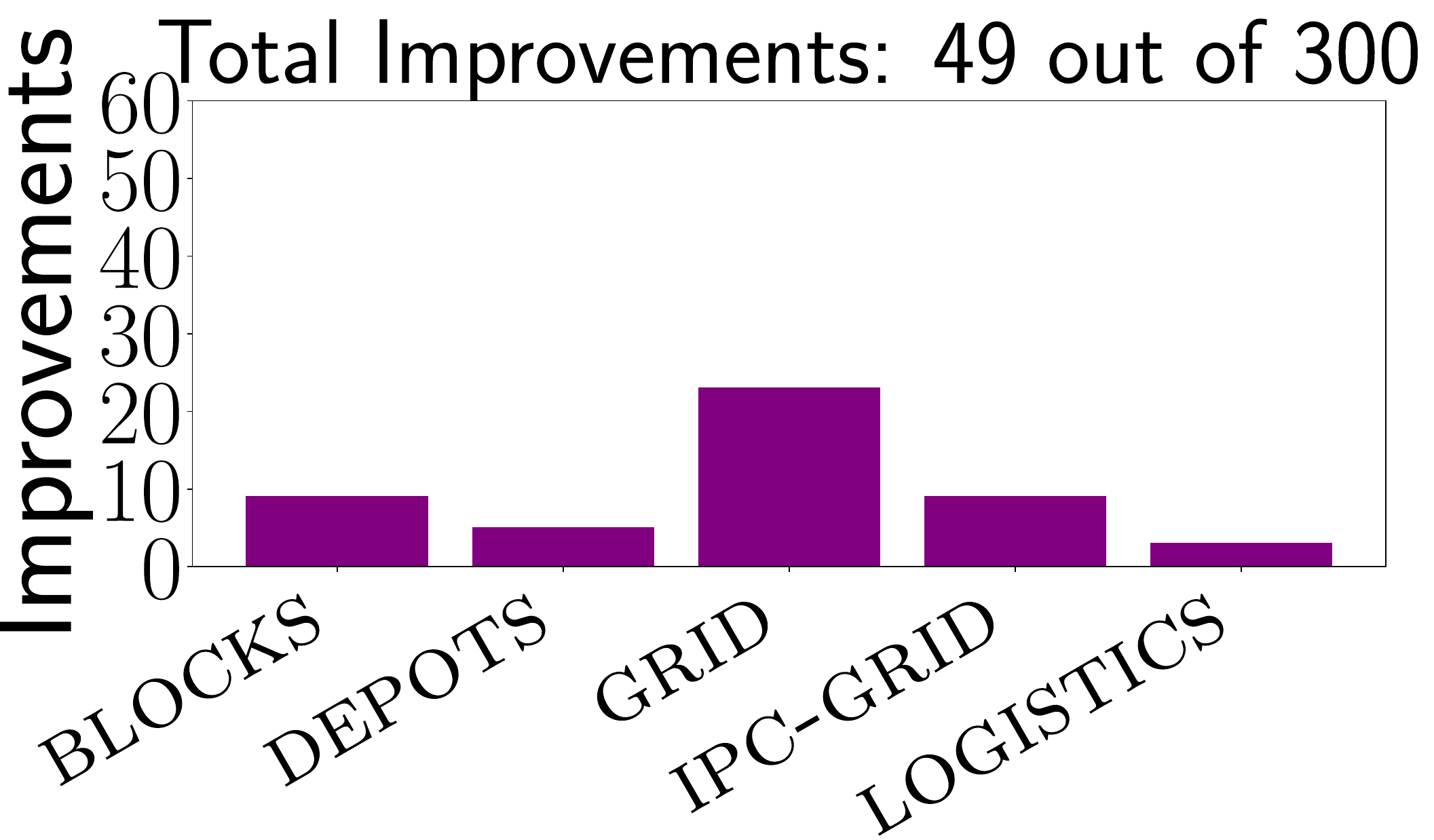}
        \caption{\it Max. Min. Distance.}
        \label{fig:result_maxMinD}
    \end{subfigure}
    \caption{Histograms for the redesign metrics, showing the number of improved/optimised problems using \approach.}
    \label{fig:histograms_improved_problems}
\end{figure*}

\begin{figure*}[h!]
    \centering
    \begin{subfigure}[b]{0.22\textwidth}
        \centering
        \includegraphics[width=\columnwidth]{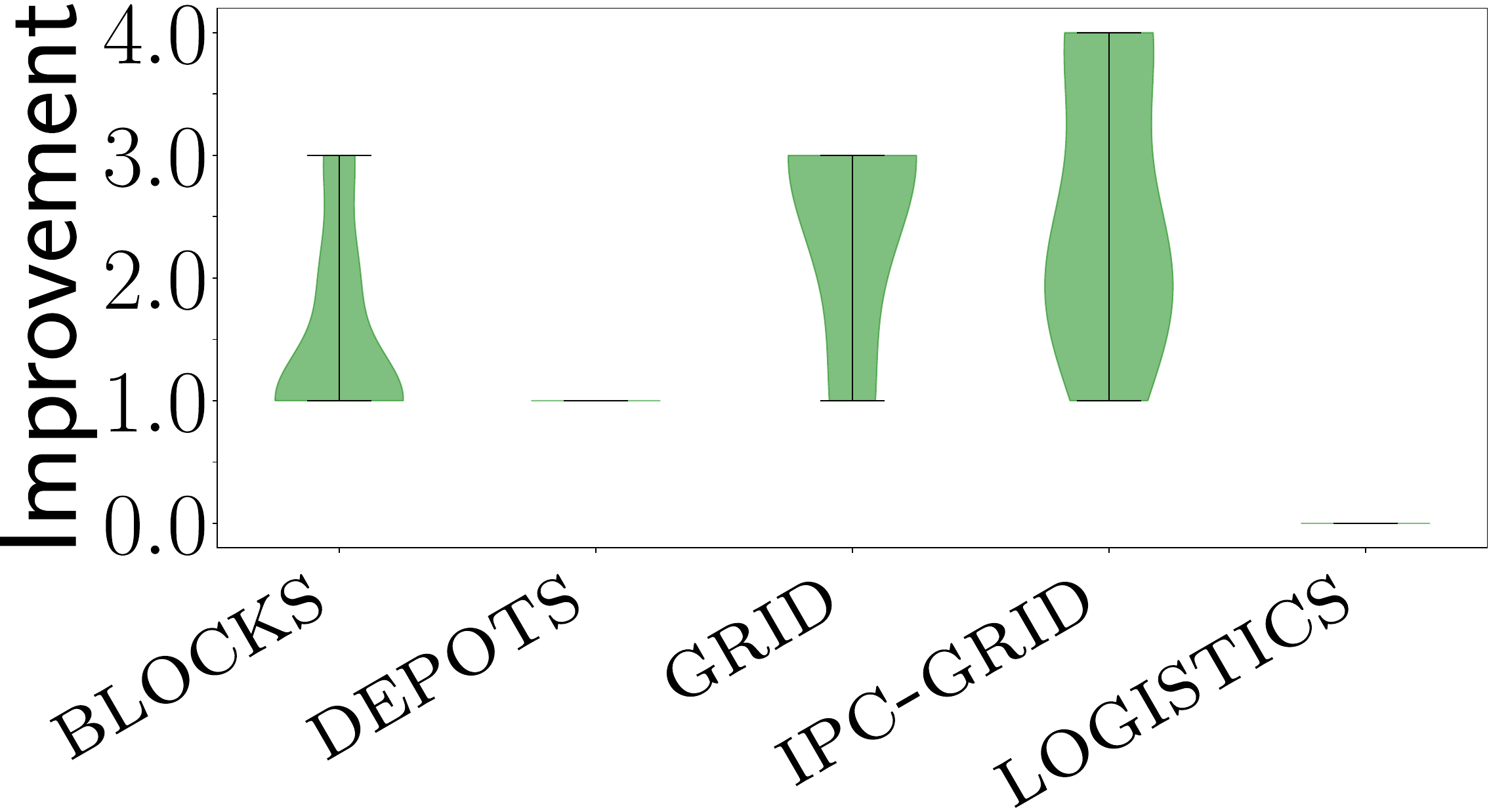}
        \caption{\it Goal Transparency.}
        \label{fig:goal_transparency_improvement}
    \end{subfigure}
    \hfill
    \begin{subfigure}[b]{0.22\textwidth}
        \centering
        \includegraphics[width=\columnwidth]{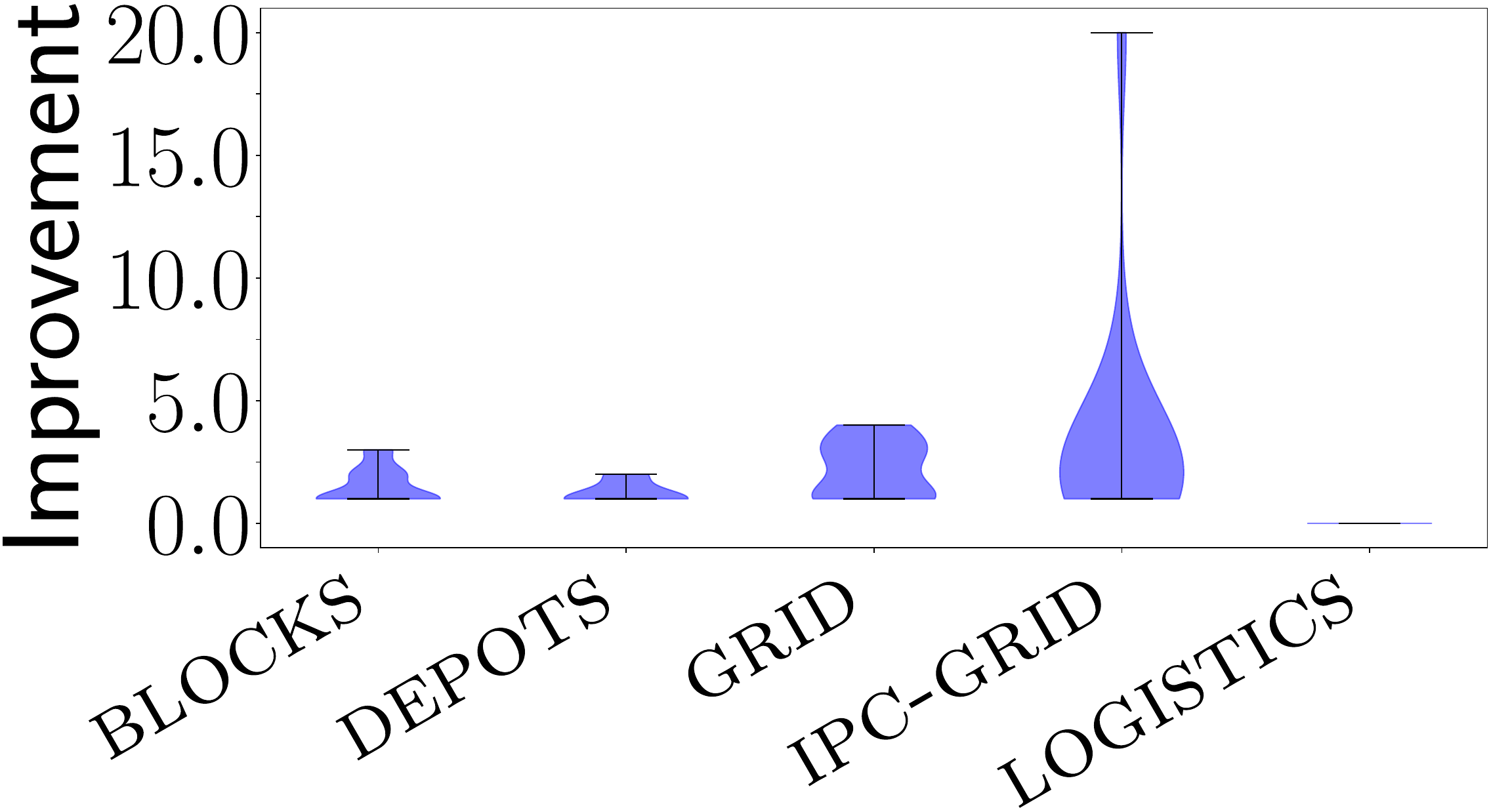}
        \caption{\it Plan Transparency.}
        \label{fig:plan_transparency_improvement}
    \end{subfigure}
    \hfill
    \begin{subfigure}[b]{0.22\textwidth}
        \centering
        \includegraphics[width=\columnwidth]{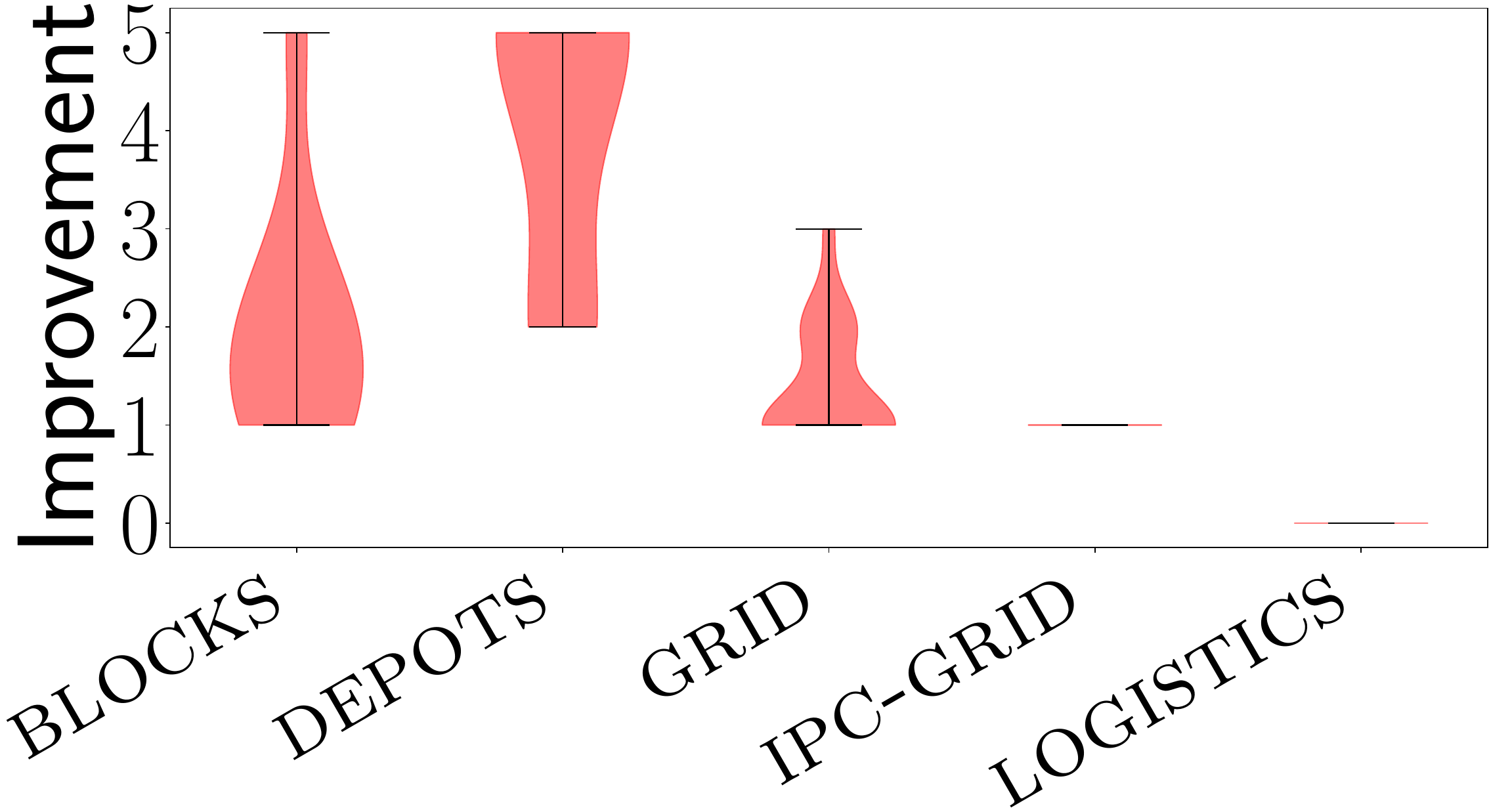}
        \caption{\it Goal Privacy.}
        \label{fig:goal_privacy_improvement}
    \end{subfigure}
    \hfill
    \begin{subfigure}[b]{0.22\textwidth}
        \centering
        \includegraphics[width=\columnwidth]{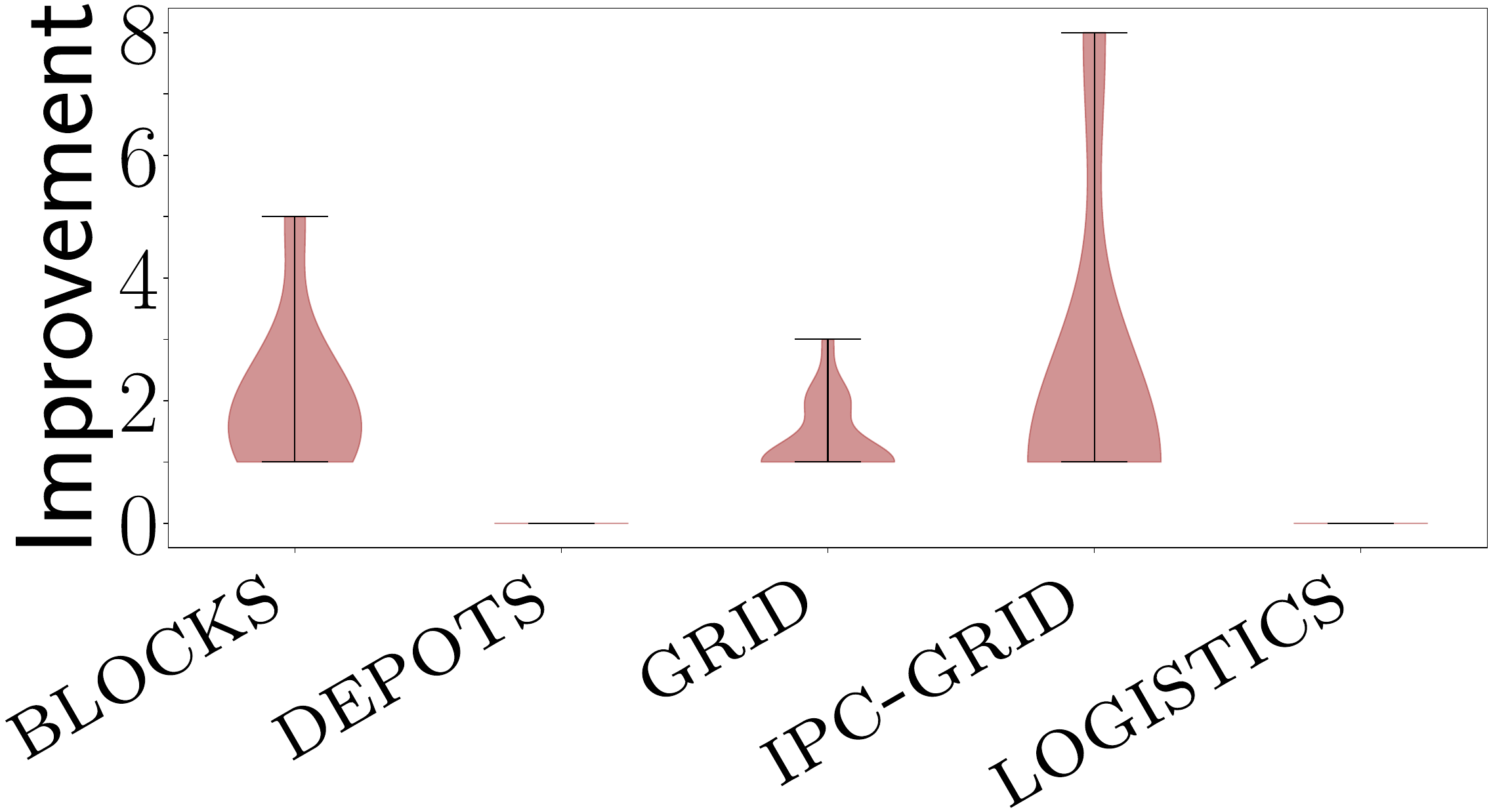}
        \caption{\it Plan Privacy.}
        \label{fig:plan_privacy_improvement}
    \end{subfigure}
    \\
    \begin{subfigure}[b]{0.22\textwidth}
        \centering
        \includegraphics[width=\columnwidth]{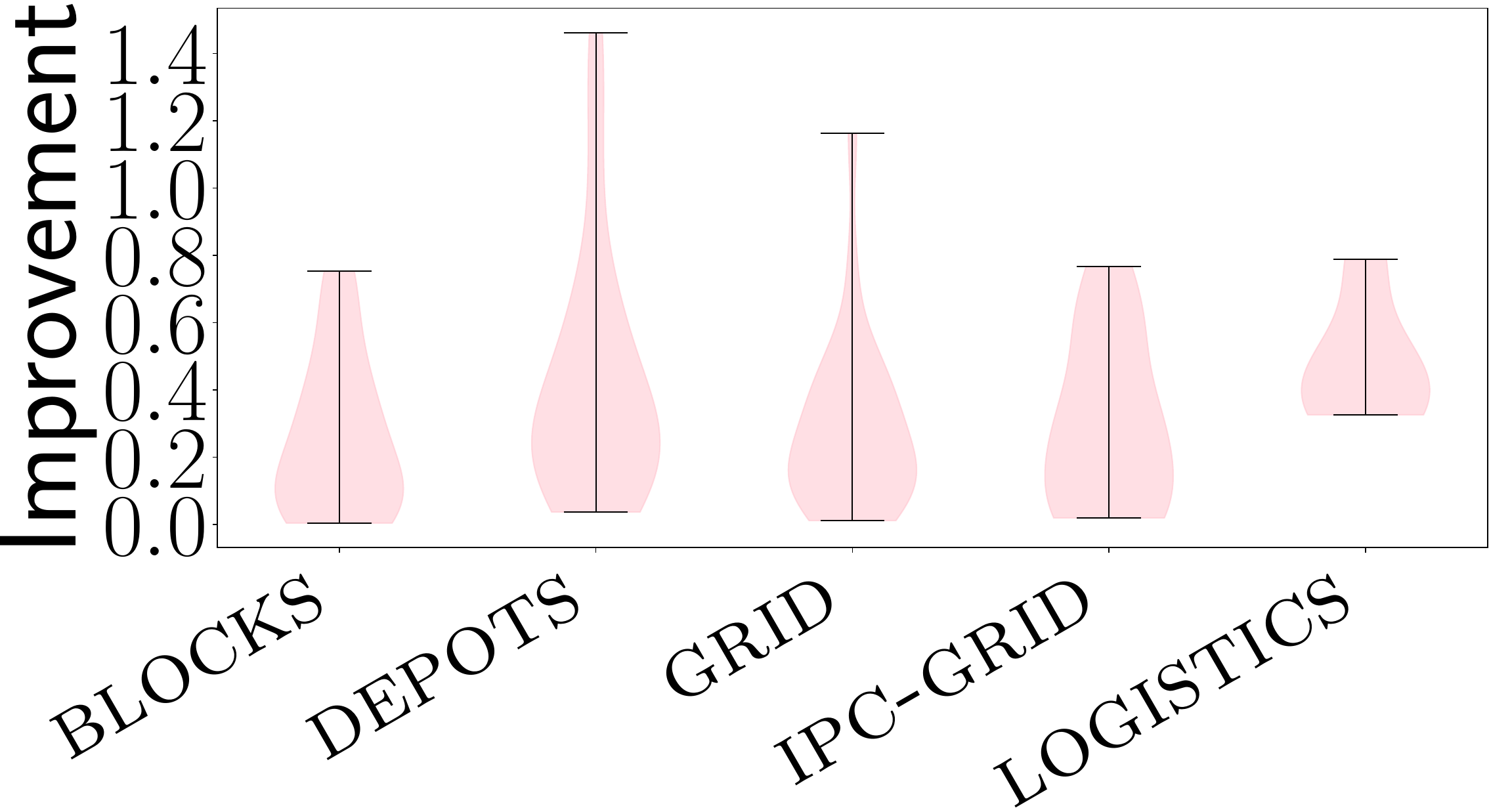}
        \caption{\it Min. Avg. Distance.}
        \label{fig:minAvgD_Improvement}
    \end{subfigure}
    \hfill
    \begin{subfigure}[b]{0.22\textwidth}
        \centering
        \includegraphics[width=\columnwidth]{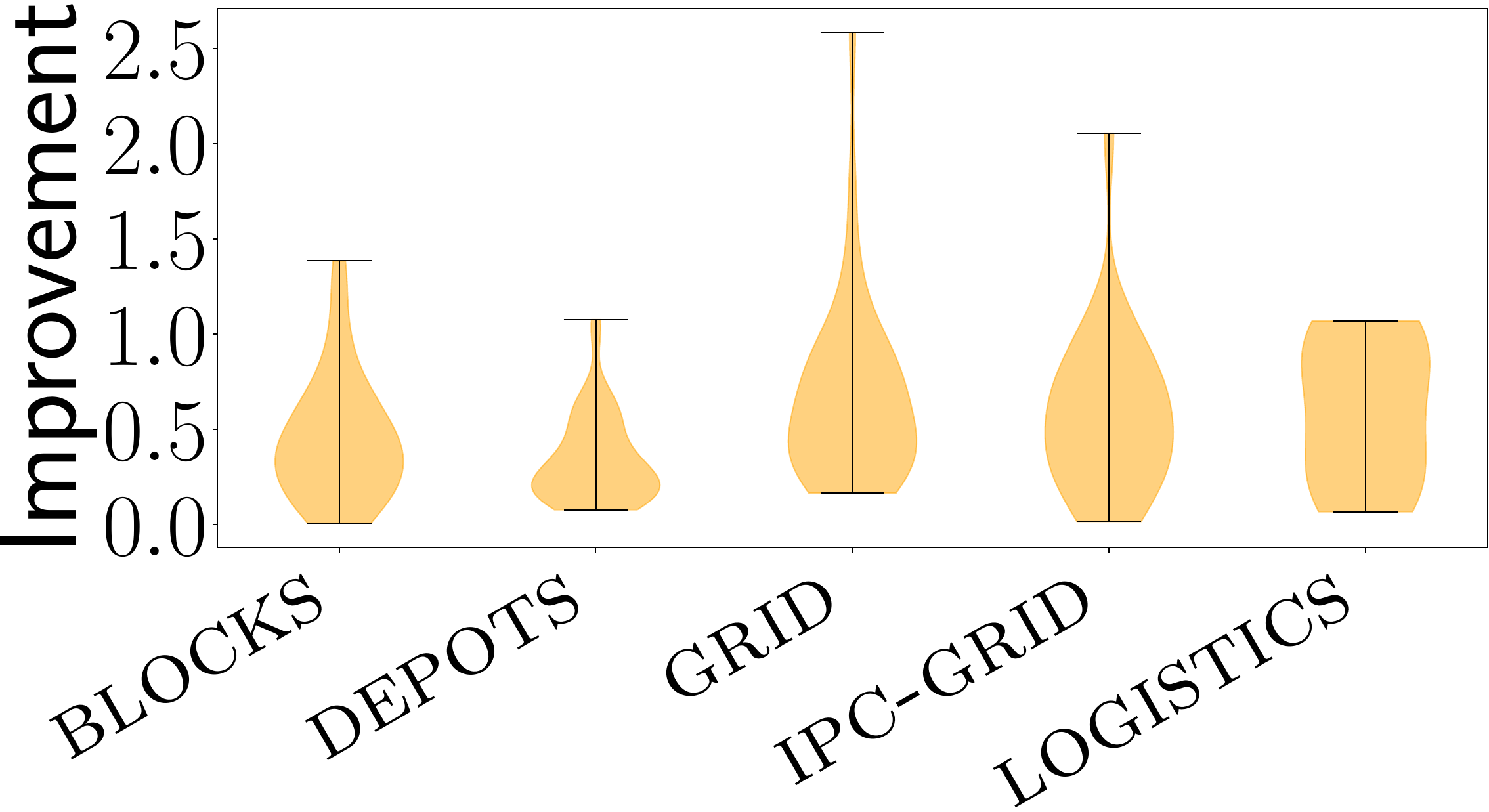}
        \caption{\it Max. Avg. Distance.}
        \label{fig:maxAvgD_Improvement}
    \end{subfigure}
    \hfill
    \begin{subfigure}[b]{0.22\textwidth}
        \centering
        \includegraphics[width=\columnwidth]{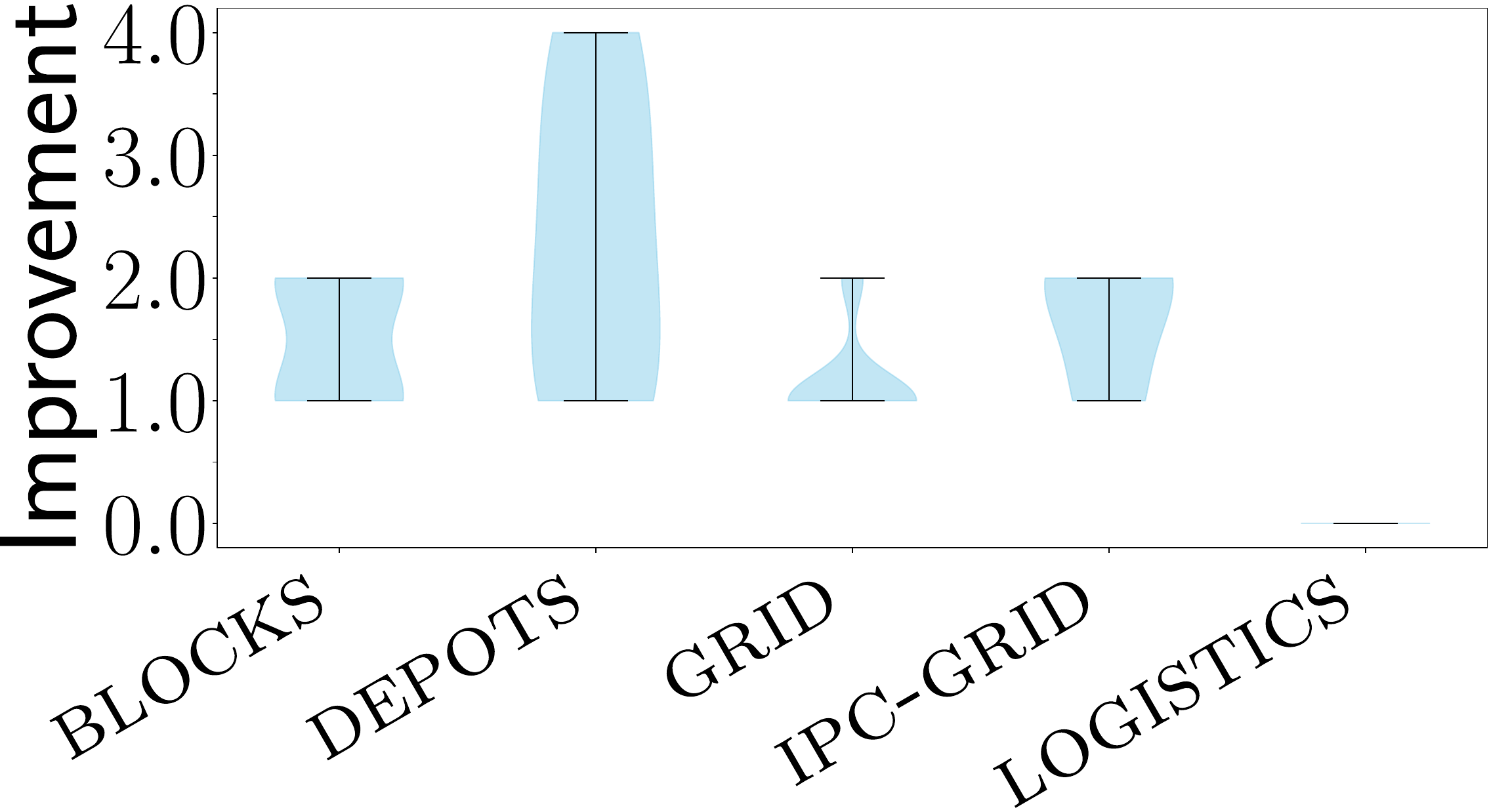}
        \caption{\it Min. Max. Distance.}
        \label{fig:minMaxD_Improvement}
    \end{subfigure}
    \hfill
    \begin{subfigure}[b]{0.22\textwidth}
        \centering
        \includegraphics[width=\columnwidth]{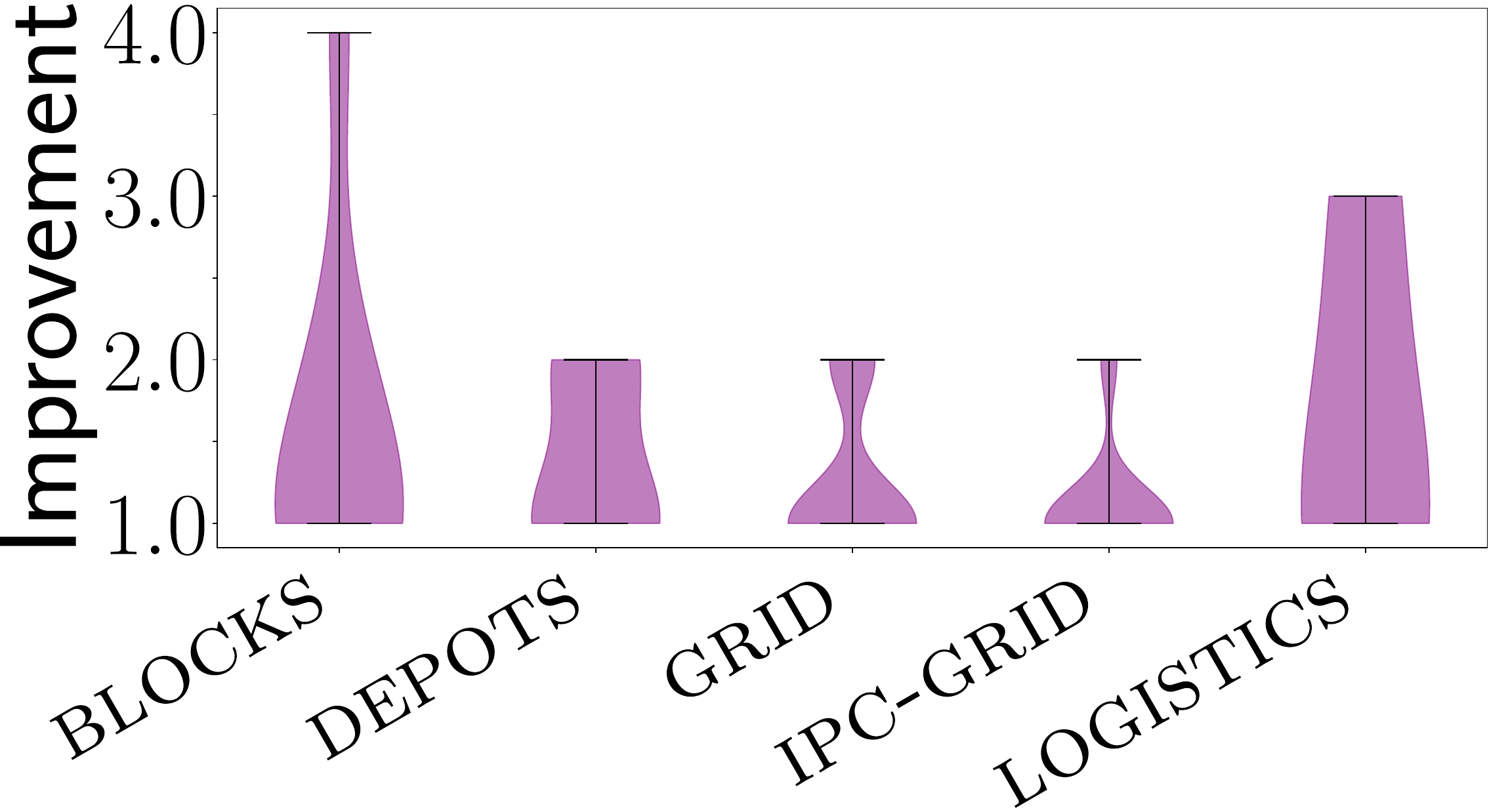}
        \caption{\it Max. Min. Distance.}
        \label{fig:maxMinD_Improvement}
    \end{subfigure}
    \hfill    
    \caption{Violin plots for the redesign metrics, showing the improvement from $m_0$ to $m^{+}$ ($|m_0 - m^{+}|$) using \approach.}
    \label{fig:histograms_improvement_mzero_mstar}
\end{figure*} 

As we can see in the GT columns (first inner table), 
our approach \approach yields the same results (same redesigned environments) as \grd-\textit{LS} \cite{ICAPS_KerenGK14} but two orders of magnitude faster.
\grd-\textit{LS} needs $119.9$ seconds on average to find the best solution in $8$ out of $60$ \ipcgrid~problems for which it returns improved environment. Instead, \approach only needs $1.1$ seconds on average.
This performance gap can be explained by two factors.
First, \approach uses \textsc{sym-k} to compute a plan-library before searching in the space of environment's modifications. By only removing the actions appearing in this library, \approach needs to explore much fewer nodes than \grd-\textit{LS}.
Second, \grd-\textit{LS} needs to generate and solve new planning problems for each node in order to compute the \textit{wcd} of the new environment, resulting in a huge computational overhead. 
On the contrary, \approach can compute the \textit{wcd} very efficiently by just analysing the common prefixes of the plans in the plan-library.

\approach's execution time takes into account the plan-library computation and the BFS search, depending on the domain and given planning environment.
In \blocks, computing $\planlibrary$ using \textsc{sym-k} takes most of the execution time, an average of $1.0$ seconds, whereas the search only takes $0.02$ seconds.
Most \blocks~problems only have a few optimal plans to achieve the goals $\goals$, therefore pruning the space of actions' removal, promoting efficient search.
In the problems that have many optimal plans (e.g., \gridnavigation~problems), \approach spends all the time in computing the plan-library, having no time for the search.
We note that in other domains, or when redesigning environments to optimise the distance-based metrics, the search procedure takes most of the execution time.

\approach's execution time increases when redesigning environments to optimise the distance-based metrics.
Two factors influence this: (1) the space of action's removal is larger, as \approach is not constrained to only remove actions in the plan-library; and (2) evaluating the metric of each search state is more costly than for the other metrics. For GT, PT, GP, and PP, \approach only reasons over plans and their common prefixes, while for the distance-based metrics \approach computes the optimal costs from each state traversed by optimal plans that achieve $G_t$ and the other states in $\goals_S = \goals \setminus \{G_t\}$.

\subsubsection{How \approach Improves Planning Environments.}

Figures~\ref{fig:histograms_improved_problems}~and~\ref{fig:histograms_improvement_mzero_mstar} show the results of \approach for the number of improved problems and the actual improvement from the initial metric value $m_0$ to the best metric value found $m^{+}$ (i.e., $|m_0 - m^{+}|$).
As we can see in Figures~\ref{fig:result_goal_transparency},~\ref{fig:result_plan_transparency},~\ref{fig:result_goal_privacy}, and~\ref{fig:result_plan_privacy}, our approach \approach improved/optimised the initial metric $m_0$ for approximately $16\%$ of the problems in our datasets for the \textit{transparency} and \textit{privacy} metrics (within the 900 seconds time limit). In contrast, \approach was able to improve/optimise many more problems for the distance-related metrics (see Figures~\ref{fig:result_minAvgD},~\ref{fig:result_maxAvgD},~\ref{fig:result_minMaxD}, and~\ref{fig:result_maxMinD}), i.e., achieving $27.5\%$ of improved/optimised problems. 
However, \approach has encountered considerable more difficulty in improving/optimising problems for the \textit{transparency} and \textit{privacy} metrics, especially in domains and problems with more actions. These problems involve more complex and bigger state spaces with more combinations of possible actions and (optimal) plans (i.e., \depots and \logistics). 
The number of possible goals, ``where they are placed'', and the distance among them could also affect the environment redesign process.
Redesigning environments for distance-related metrics tends to be less complex and difficult because there are cases where removing a single action (or few actions) may reduce/increase the distance among the goals, finding the set of removed actions at earlier stages in the search process.

Figure ~\ref{fig:histograms_improvement_mzero_mstar} depicts violin plots to show the actual improvement between the initial metric value $m_0$ and the best metric value found $m^{+}$, i.e., $|m_0 - m^{+}|$.
Overall, \approach could find significant improvements from the initial metric $m_0$ for the problems it was able to finish the redesign search process (within the time limit).
\approach has excelled in finding improvements for the distance-related metrics in most domains (see Figures~\ref{fig:maxAvgD_Improvement},~\ref{fig:maxAvgD_Improvement},~\ref{fig:minMaxD_Improvement}, and~\ref{fig:maxMinD_Improvement}), whereas it has faced considerable difficulties for finding improvements for the \textit{transparency} and \textit{privacy} metrics (see Figures~\ref{fig:goal_transparency_improvement},~\ref{fig:plan_transparency_improvement},~\ref{fig:goal_privacy_improvement}, and~\ref{fig:plan_privacy_improvement}), especially for \depots and \logistics~(as described previouly).

The number of solutions depends on the domain, planning problem, the metric, and the time and memory limits.
Across all the problems for which \approach finds a metric improvement, it returns an average of $14.35$ solutions. There are problems where \approach returns up to $960$ different solutions.

\section*{Related Work}\label{sec:related_work}

This paper's contributions relate to several previous works in the literature in two different dimensions: the environment redesign objective and metric, and on the algorithmic side.

Most approaches to planning environment redesign assume the observer's objective is to modify the environment to facilitate the recognition of goals and plans~\cite{ICAPS_KerenGK14,GRD_SonSSSY16,TIST_MirskyGSK19}.
Later works in \grd frame and solve this task under different observability settings~\cite{Keren2015goal,GRD_AAAI_KerenGK16,IJCAI_KerenGK16}, environment assumptions~\cite{GRD_WayllaceH0S16,GRD_WayllaceH017,SGRD_WayllaceKGK0Z20,GRD_Wayllace022}, or observer's capabilities~\cite{AGR_ShvoM20,AGRD_GallRK21}.
Unlike these works, we assume the interested party might want to modify the environment for tasks different than recognising goals and plans.
The redesigned environments obtained when optimising our new metrics can be useful for many planning applications such as \textit{Counterplanning}~\cite{PozancoCounterPlanningEFB18} or \textit{Anticipatory Planning}~\cite{DBLP:conf/aips/BurnsBRYD12,aicomm18-anticipatory}, among others.

On the algorithmic side, most works use search algorithms to explore the space of actions' removal~\cite{keren2021goal}.
While \approach searches in the same space using similar algorithms, it differs from these works as follows.
First, \approach~presents a good compromise between approaches that do not use plan-libraries~\cite{JARI_KerenGK19} and those that need pre-defined plan-libraries~\cite{TIST_MirskyGSK19}.
\approach~exploits recent advances in top-quality planning to efficiently compute plan-libraries for pruning the space of modifications.
Second, \approach~is \textit{metric-agnostic}.
Previous approaches are \textit{metric-dependent}, devising pruning techniques and stopping conditions tailored to specific metrics, whereas \approach is more general and can accommodate a wide variety of metrics.
Third, \approach~is able to return all the best solutions found until a stopping condition is met.
This is usually a desirable feature in applications with \textit{humans-in-the-loop}~\cite{boddy2005course,sohrabi2018ai}, as humans prefer to have diverse solutions to choose from.

\section*{Conclusions}\label{sec:conclusions}

In this paper, we extended the definition of environment design from previous work~\cite{ICAPS_KerenGK14,JARI_KerenGK19,TIST_MirskyGSK19}, and we introduced a more general task for \textit{Planning Environment Redesign}. 
We defined a new set of environment redesign metrics that endows and facilitates not only the recognition of goals and plans (\textit{transparency} or \textit{predicability}), but also other tasks, such as goal and plan privacy, deception, risk avoidance, or plan for opportunities.
We showed that our general environment redesign approach \approach~is \textit{metric-agnostic}, and can optimise a wide variety of redesign metrics. Our experiments show that \approach~is efficient to optimise different metrics, and it outperforms (being orders of magnitude faster) the most efficient \grd approach of \citet{ICAPS_KerenGK14}.

We intend to expand this work in two directions: improving \approach's performance and making the problem definition even more general.
Regarding \approach's performance, we aim to develop heuristics to improve the redesign search process.
Namely, one could prioritise removing actions belonging to a higher number of plans in the plan-library when optimising goal transparency. 
Moreover, we aim to study how to balance the time allocated to compute the plan-library and the time to search in the space of modifications.
While in some problems we may be interested in computing the whole plan-library to ensure optimality, in others this task might take a large amount of time. Then, we could decide to only compute a subset of plans that would then guide the search rather than pruning actions not appearing on it.
As for the problem formulation, the only environment modifications allowed by most approaches are actions' removal. We envisage allowing other modifications such as removing or adding objects or predicates from the initial state.
Finally, we aim to investigate how to jointly optimize sets of these metrics by framing planning environment redesign as a multi-objective task.

\section*{Acknowledgements} 

This paper was prepared for informational purposes in part by the Artificial Intelligence Research group of J.P. Morgan Chase \& Co and its affiliates (J.P. Morgan) and is not a product of the Research Department of J.P. Morgan. 
J.P. Morgan makes no representation and warranty whatsoever and disclaims all liability, for the completeness, accuracy, or reliability of the information contained herein. This document is not intended as investment research or investment advice, or a recommendation, offer, or solicitation for the purchase or sale of any security, financial instrument, financial product, or service, or to be used in any way for evaluating the merits of participating in any transaction. It shall not constitute a solicitation under any jurisdiction or to any person if such solicitation under such jurisdiction or to such person would be unlawful.

\bibliography{bibliography}

\end{document}